\newif\ifabstract
\abstracttrue
%%%%ARXIV
 \abstractfalse %Uncommenting this line gives the full version
%%%
\newif\iffull
\ifabstract \fullfalse \else \fulltrue \fi
%%%%%%%%%%%%%%%%%%%%%%

\documentclass[11pt,onecolumn]{article}
\usepackage{amsfonts}
\usepackage{amssymb}
\usepackage{amstext}
\usepackage{amsmath}
\usepackage{xspace}
\usepackage{theorem}
\usepackage{graphicx}
\usepackage{url}
\usepackage{graphics}
\usepackage{colordvi}
\usepackage{colordvi}
\usepackage{subfigure}

\usepackage{setspace}
\usepackage{color}
\usepackage{cite}
\usepackage[disable]{todonotes}
\usepackage{hhline}
\usepackage{array}
\usepackage{pifont}
\usepackage{enumerate}

\usepackage{graphicx,epsfig,amsfonts,bbm,epstopdf,tabularx}

\usepackage{url}

\usepackage{xcolor}
\hyphenation{op-tical net-works semi-conduc-tor}
\usepackage[comma,numbers,square,sort&compress]{natbib}
\usepackage{algorithm,algpseudocode}
\usepackage{algorithmicx}
\usepackage{stmaryrd}
\usepackage{multirow}
\usepackage{graphicx}
\usepackage{arydshln}
\usepackage{mathtools}
\usepackage{tabulary}
\usepackage{booktabs}
\usepackage{subfigure}
\usepackage{xspace}

\textheight 9in \advance \topmargin by -1.0in \textwidth 6.5in
\advance \oddsidemargin by -0.8in
\newcommand{\myparskip}{3pt}
\parskip \myparskip

        {\hspace*{\fill}$\Box$\par\vspace{4mm}}

\setcounter{footnote}{0}

%\newcommand{\prob}[1]{\ensuremath{\text{{\bf Pr}$\left[#1\right]$}}}
%\newcommand{\expct}[1]{\ensuremath{\text{{\bf E}$\left[#1\right]$}}}
%\newcommand{\expect}[1]{\ensuremath{\text{{\bf E}$[#1]$}}}
%\newcommand{\MSTop}{\operatorname{MST}}
%\newcommand{\MST}{\ensuremath{\MSTop}\xspace}

%\newcommand{\event}{{\cal{E}}}

%--------------------------------------------------------------
%--------------------------------------------------------------
%Problem names
%--------------------------------------------------------------
%--------------------------------------------------------------

%--------------------------------------------------------------
%--------------------------------------------------------------
%Sets
%--------------------------------------------------------------
%--------------------------------------------------------------

%--------------------------------------------------------------
%--------------------------------------------------------------
%General Environments
%--------------------------------------------------------------
%--------------------------------------------------------------

\newcommand{\be}{\begin{enumerate}}
\newcommand{\ee}{\end{enumerate}}
\newcommand{\bd}{\begin{description}}
\newcommand{\ed}{\end{description}}
\newcommand{\bi}{\begin{itemize}}
\newcommand{\ei}{\end{itemize}}

\renewcommand{\phi}{\varphi}

%first parameter is optional

%First parameter: index from which counting starts is optional, default is 1; second parameter is the %letter before the property indices
%Example: \begin{properties}{C} or \begin{properties}[5]{C} will start with C6

\setlength{\parskip}{2mm} \setlength{\parindent}{0mm}

\newtheorem{remark}{\textbf{Remark}}

\newtheorem{lemma}{\textbf{Lemma}}
\newtheorem{theorem}{\textbf{Theorem}}
\newtheorem{corollary}{\textbf{Corollary}}

\newtheorem{proof}{Proof}

\newcommand{\AC}{\texttt{AC-CMA2B}\xspace}

\newcommand{\CMAB}{\texttt{CMA2B}\xspace}
\newcommand{\AAE}{\texttt{CO-AAE}\xspace}
\newcommand{\IAAE}{\texttt{IND-AAE}\xspace}
\newcommand{\IUCB}{\texttt{IND-UCB}\xspace}
\newcommand{\AAEbasic}{\texttt{AAE}\xspace}

\newcommand{\ucbo}{\texttt{CO-UCB}\xspace}

\newcommand{\cS}{\mathcal{K}\xspace}

\newcommand{\cint}{{\texttt{CI}}\xspace}

\def\E{\mathbb{E}}

\newcommand{\cK}{\mathcal{K}}
\newcommand{\cA}{\mathcal{A}}

\begin{document}

\title{Distributed Bandits with Heterogeneous Agents}
\author{Lin Yang\thanks{College of Information and Computer Sciences, UMass Amherst. Email: {\tt linyang@cs.umass.edu}.} \and Yu-zhen Janice Chen \thanks{College of Information and Computer Sciences, UMass Amherst. Email: {\tt yuzhenchen@cs.umass.edu}.} \and Mohammad Hajiesmaili \thanks{College of Information and Computer Sciences, UMass Amherst. Email: {\tt hajiesmaili@cs.umass.edu}.} \and John CS Lui \thanks{Department of Computer Science and Engineering, The Chinese University of Hong kong. Email: {\tt cslui@cse.cuhk.hk}.} \and Don Towsley \thanks{College of Information and Computer Sciences, UMass Amherst. Email: {\tt towsley@cs.umass.edu}.}}
%\author{XX}

\maketitle

\begin{abstract}
This paper tackles a multi-agent bandit setting where $M$ agents cooperate together to solve the same instance of a $K$-armed stochastic bandit problem. The agents are \textit{heterogeneous}: each agent has limited access to a local subset of arms and the agents are asynchronous with different gaps between decision-making rounds. 
The goal for each agent is to find its optimal local arm, and agents can cooperate by sharing their observations with others. While cooperation between agents improves the performance of learning, it comes with an additional complexity of communication between agents. 
For this heterogeneous multi-agent setting, we propose two learning algorithms, \ucbo and \AAE. 
We prove that both algorithms achieve order-optimal regret, which is $O\left(\sum_{i:\tilde{\Delta}_i>0} \log T/\tilde{\Delta}_i\right)$, where $\tilde{\Delta}_i$ is the minimum suboptimality gap between the reward mean of arm $i$ and any local optimal arm. In addition, a careful selection of the valuable information for cooperation, \AAE achieves a low communication complexity of $O(\log T)$. Last, numerical experiments verify the efficiency of both algorithms.

% , $O\left(\sum_{i\in \mathcal{K}}(M\alpha\log T)/(\tilde{\Delta}^2_i)\right)$
\end{abstract}

\section{Introduction}
\label{sec:intro}

Multi-armed bandits (MABs)~\cite{bubeck2012regret,slivkins2019introduction} fall into a well-established framework for learning under uncertainty that has been studied extensively since the 1950s after the seminal work of~\cite{robbins1952some}. MABs have a broad range of applications including online shortest path routing, online advertisement, channel allocation, and recommender systems~\cite{slivkins2019introduction, jiang2016via, langford2007epoch, bresler2016collaborative}.
In the basic MAB problem, a learner repeatedly pulls an arm in each round, and observes the reward/loss associated with the selected arm, but not those associated with others. The goal of the learner is to minimize regret, which compares the rewards/loss received by the learner to those accumulated by the best arm in hindsight.

Distributed MABs, which are extensions of basic MABs, have been studied extensively recently in different settings~\cite{liu2010distributed,szorenyi2013gossip,bistritz2018distributed,kolla2018collaborative,kalathil2014decentralized,dubey2020cooperative,martinez2019decentralized,wang2020optimal,sankararaman2019social,magesh2019multi,youssef2021resource,bande2021dynamic,magesh2021decentralized}. Distributed bandits is well motivated by a broad range application scenarios such as (1) large-scale learning systems~\cite{cesa2020cooperative}, in domains such as online advertising and recommendation systems; (2) cooperative search by multiple robots~\cite{li2014cooperative,jin2017cooperative}; (3) applications in wireless cognitive radio~\cite{bubeck2020non,liu2010distributed,liu2010decentralized,besson2018multi}; and distributed learning in geographically distributed communication systems, such as a set of IoT devices learning about the underlying environments~\cite{mcquade2012global,darak2019multi,avner2016multi,bonnefoi2017multi,xia2020multi}.
Most prior work on multi-agent MABs assume that agents are \textit{homogeneous}: all agents have full access to the set of all arms, and hence they solve the same instance of a MAB problem, with the aim to minimize the aggregate regret of the agents either in a \textit{competition} setting~\cite{anandkumar2011distributed,boursier2019sic,bubeck2020non,wang2020optimal,bistritz2018distributed,boursier2020practical,liu2010distributed,liu2010decentralized,besson2018multi}, i.e., degraded or no-reward when multiple agents pull the same arm, or in a \textit{collaboration/cooperation} setting~\cite{martinez2019decentralized,wang2020optimal,landgren2018social,landgren2016distributed,kolla2018collaborative,sankararaman2019social}, where agents pulling the same arm observe independent rewards, and agents can communicate their observations to each other in order to improve their learning performance.
% We review the literature in Section~\ref{sec:rel}.

\subsection{Distributed Bandits with Heterogeneous Agents}
In this paper, we study a heterogeneous version of the cooperative multi-agent MAB problem in which the agents only have partial access to the set of arms.
More formally, we study a multi-agent system with a set $\cA = \{1,\dots,M\}$ of agents and a set $\cK = \{1,\dots, K\}$ of arms. Agent $j\in\cA$ has access to a subset $\cK_j \subseteq \cK$  of arms. We refer to arms in $\cK_j$ as \textit{local} arms for agent $j$. The heterogeneity of agent also appears in their learning capabilities that lead to different \textit{action rates}; agent $j\in\cA$ can pull an arm every $1/\theta_j$ rounds, $0<\theta_j\le 1$. Here $\theta_j$ is the action rate of agent $j$.
The goal of each agent is to learn the best local arm within its local set, and agents can share information on overlapping arms in their local sets to accelerate the learning process. In this model, we assume agents are fully connected and can truthfully broadcast their observed rewards to each other. We call this setup \textit{Action-constrained Cooperative Multi-agent MAB} (\AC) and formally define it in Section~\ref{sec:model}.

\subsection{Motivating Application}
Cooperative multi-agent bandits have been well-motivated in the literature, and in the following, we motivate the heterogeneous-agent setting.
Online advertisement is a classic application that is tackled using the bandit framework. In the online advertisement, the goal is to select an ad (arm) for a product or a search query, and the reward is the revenue obtained from ads. In the context of \AC, consider a scenario that for a set of related products, a separate agent runs a bandit algorithm to select a high-reward ad for each product in the set. However, the set of available ads might have partial overlaps among multiple related products, i.e., different agents might have some overlapping arms. Hence, by leveraging the \AC model, agents running different bandit algorithms can cooperate by sharing their observations to improve their performance. One may imagine similar cooperative scenarios for recommendation systems in social networks~\cite{sankararaman2019social} where multiple learning agents in different social networks, e.g., Facebook, Instagram, cooperate to recommend posts from overlapping sets of actions. Even more broadly, the multi-agent version of classical bandit applications is a natural extension~\cite{yang2021cooperative}. For example, in online shortest path routing problem\cite{zou2014online,talebi2017stochastic}, as another classic example of bandit applications, multi-agent setting could capture the case in which the underlying network is large and each agent is responsible for routing within a sub-graph in the network. Last, it is also plausible that the to have asynchronous learning among different agents in the sense that each agent has its own action rate for decision making.

% In multi-hop wireless networks, delays are time varying and modeled by independent and identically distributed random processes, whose parameters are initially unknown. Cooperative bandit algorithms, implemented at geographical distributed nodes, can be used to learn shortest delay paths . In practice, agents are unable to; the sets of paths that they can observe are determined by geographical constraints.

\subsection{Contributions}

The goal of this paper is to design cooperative algorithms with sublinear regret and low communication complexity.
This is challenging since these two goals can be in conflict. Intuitively, with more information exchange, the agents can benefit from empirical observations made by others, resulting in an improved regret. However, this comes at the expense of additional communication complexity due to information exchange among agents.
In this paper, we tackle \AC by developing two cooperative bandit algorithms and analyze their regret and communication complexities. The contribution is summarized as follows.

First, to characterize the regret of our algorithms, we introduce $\tilde{\Delta}_i$ as a customized notion of the suboptimality gap, which is unique to \AC.
Specifically, the parameter $\tilde{\Delta}_i$, $i\in \mathcal{K}$ (see Equation~\eqref{eq:tilde_delta} for the formal definition), measures the minimum gap between the mean reward of arm $i$ and local optimal arms of agents including $i$ in their local sets. Intuitively,  $\{\tilde{\Delta}_i\}_{i\in \cK}$ determine the ``difficulty'' of the bandit problem in a distributed and heterogeneous setting and appear in the regret bounds.
% To the best of our knowledge, this is the first optimal result for the \AC setting.
% We also verify the  performance of the proposed algorithms through simulations. The detailed contributions are as follows.

Second, we present two learning algorithms, \ucbo and \AAE, which  extend the Upper Confidence Bound algorithm and the Active Arm Elimination algorithm~\cite{even2006action} to the cooperative setting, respectively.
We use the notion of local suboptimality gap $\tilde{\Delta}_i$ and characterize the regrets of \ucbo and \AAE and show that both algorithms achieve a regrets of $O\left(\sum_{i:\tilde{\Delta}_i>0}\log T/\tilde{\Delta}_i\right)$.
By establishing a regret lower bound for \AC, we show that the above regret is optimal. To the best of our knowledge, this is the first optimality result for distributed bandits in a heterogeneous setting.
Even though both algorithms are order-optimal, the regret of \ucbo is smaller than \AAE by a constant factor (see Theorems \ref{thm:2} and \ref{thm:4}).
This is also validated by our simulations in Section~\ref{sec:exp} with real data traces.

Last, we investigate the communication complexity of both algorithms, which measures the communication overhead incurred by the agents for cooperation to accelerate the learning process.
In our work, communication complexity is defined to be the total number of messages, i.e., arm indices and observed rewards, exchanged by the agents. Our analysis shows that \ucbo generally needs to send as much as $O(M\Theta T)$ amount of messages, where $\Theta$ is the aggregate action rate of all agents. Apparently, the communication complexity of \ucbo is higher than that of \AAE, which is $O\left(\sum_{i:\tilde{\Delta}_i>0}\log T/\tilde{\Delta}^2_i\right)$.

We note that the authors in~\cite{yang2021cooperative} also tackle a cooperative bandit problem with multiple heterogeneous agents with partial access to a subset of arms and different action rates. However, in~\cite{yang2021cooperative}, the goal of each agent is to find the global optimal arm, while in this work, the goal of each agent is to find its local optimal arm. This difference leads to substantially different challenges in the algorithm design and analysis. More specifically, in~\cite{yang2021cooperative}, a foundational challenge is to find an effective cooperative strategy to resolve a dilemma between pulling local vs. external arms. This is not the case in \AC since the goal is to find the best local action. In addition, in~\cite{yang2021cooperative}, the communication complexity of algorithms is not analyzed. Our paper, instead, focuses on designing cooperative strategies with low communication complexities.

\section{Model and Preliminaries\label{sec:model}}

\subsection{System Model}
We consider a cooperative multi-agent MAB (\CMAB) setting, where there is a set $\mathcal{A} = \{1,\dots,M\}$ of independent agents, each of which has partial access to a global set $\mathcal{K} = \{1,\dots, K\}$ of arms. Let $\mathcal{K}_j \subseteq \mathcal{K}, K_j = |\mathcal{K}_j|$, be the set of arms available to agent $j\in\mathcal{A}$. Associated with arms are mutually independent sequences of i.i.d. rewards, taken to be Bernoulli with mean $0\le \mu(i)\le 1$,  $i\in \mathcal{K}$. We assume that the local sets of some agents overlap so that cooperation among agents makes sense.

In addition to differences in their access to arms, agents also differ in their decision making capabilities. Specifically, considering decision rounds $\{1,\dots,T\}$, agent $j$ can pull an arm every $\omega_j \in \mathbb{N}^+$ rounds, i.e., decision rounds for agent $j$ are $t=\omega_j,2\omega_j,\ldots,N_j\omega_j$, where ${N_j=\lfloor T/\omega_j \rfloor}$. Parameter $\omega_j$ represents the \textit{inter-round gap} of agent $j$.
% , and the larger $\omega_j$ is, the slower agent $j$ in pulling arms.
For simplicity of analysis, we define $\theta_j:=1/\omega_j$ as the \textit{action rate} of agent $j$. Intuitively, the larger $\theta_j$, the faster agent $j$ can pull arms.

We assume that all agents can communicate with all other agents. Hence every time an agent pulls an arm, it can broadcast the arm index and the reward received to any other agent.  However, there is a deterministic communication delay, $d_{j_1,j_2}$ , between any two agents, $j_1$ and $j_2$, measured in units of decision rounds.

\subsection{Performance Metrics}
At each decision round, agent $j$ can pull an arm from $\cK_j$. The goal of each agent is to learn the best local arm. The regret of agent $j$ is defined as
\begin{equation}
	\label{eq:regret_ac}
	R_T^j := \mu(i_j^*) N_j - \sum\nolimits_{t\in \left\{k\omega_j:k=0,1,\ldots,N_j\right\}}x_{t}(I_{t}^{j}),
\end{equation}
where $i_j^*$ is the local optimal arm in $\mathcal{K}_j$, $I_{t}^{j} \in \cS_j$ is the action taken by agent $j$ at round $t$, and $x_{t}(I_{t}^{j})$ is the realized reward.

Without loss of generality, we assume that the local sets of at least two agents overlap, i.e., $\exists j,j'\in\mathcal{A}: \cS_j \cap \cS_{j'} \neq \emptyset$ and the overall goal is to minimize aggregate regret of all agents, i.e., $R_T=\sum_{j\in \mathcal{A}}R_T^j$.

In addition, it is costly to transmit messages in some practical networks. To measure the communication overhead of an algorithm in \AC, we simply assume that each message contains enough bits to transmit the index of an arm or an observation on the reward, and similar to \cite{wan2020projection,wang2020optimal}, the communication complexity, denoted as $C_T$, is defined to be the total number of messages sent by all agents in $[1,T]$.

\subsection{Additional Notations and Terminologies}
To facilitate our algorithm design and analysis, we introduce the following notations.
By $\mathcal{A}_{i}$, we denote set of agents that can access arm $i$, i.e.,
$\mathcal{A}_{i} :=\left\{j\in \mathcal{A}: i\in \mathcal{K}_j\right\}$.
By $\mathcal{A}_i^*$, we denote the set of agents whose optimal local arm is $i$, i.e., $\mathcal{A}^*_{i} :=\left\{j\in \mathcal{A}_i: \mu_i \geq \mu_{i'},~\text{for}~i'\in \mathcal{K}_j\right\}$.
Note that $\mathcal{A}^*_{i}$ may be empty. Moreover, let $\mathcal{A}^*_{-i} = \mathcal{A}_{i}/\mathcal{A}^*_{i}$ be the set of agents including $i$ as a suboptimal arm. Finally, let $M_i$, $M_{i^*}$, and $M_{-i}$ be the sizes of $\mathcal{A}_{i}$, $\mathcal{A}_{i^*}$, and $\mathcal{A}_{-i}$ respectively.

%Let $\mathcal{S}^*$ be the set of arms which satisfy $\mathcal{A}^*_{i}\neq \emptyset$.

%By $\mathcal{C}_i$, we denote the set of arms which share the same agent with $i$, i.e.,
%\begin{equation*}
%\mathcal{C}_{i} \overset{\text{def}}{=}\bigcup_{j\in \mathcal{A}_{i}}\mathcal{K}_j.
%\end{equation*}

By $\Delta(i,i')$, we denote the difference in the mean rewards of arms $i$ and $i'$, i.e., $\Delta(i,i') :=\mu(i)-\mu(i')$.
Specifically, $\Delta(i^*,i)$ written as $\Delta_i$, which is known as the sumpoptimality gap in the basic bandit problem.
In additional to this standard definition, we introduce a \CMAB-specific version of the suboptimality gap, denoted by $\tilde{\Delta}_i$ as follows
\begin{equation}
	\label{eq:tilde_delta}
	\tilde{\Delta}_i:= \left\{\begin{array}{ll}\min_{j\in \mathcal{A}_{-i}} \Delta(i_j^*,i),& \mathcal{A}_{-i}\neq \emptyset; \\
		0,& \text{otherwise}.\end{array}\right.
\end{equation}
Last, we define $\Theta$ and $\Theta_i$, $i\in \mathcal{K}$, as follows.
\begin{equation*}
	\begin{split}
		\Theta :=\sum_{j\in \mathcal{A}} \theta_j, \quad \text{and} \quad \Theta_{i} :=\sum_{j\in \mathcal{A}_i} \theta_j.
	\end{split}
\end{equation*}
\begin{remark}
Both $\tilde{\Delta}_i$ and $\Theta_{i}$ play key roles in characterizing the regret bounds of an algorithm in \AC. Specifically, $\tilde{\Delta}_i$ measures the minimum gap of the reward mean between the local optimal arm of agents in $\mathcal{A}_{-i}$ and arm $i$, and $\Theta_{i}$ measures the aggregate action rate of agents in $\mathcal{A}_i$, and roughly the larger the $\Theta_{i}$, the higher the rate at which arm $i$ could be pulled by the set of agents that belongs to, i.e., $\mathcal{A}_i$.
\end{remark}

\section{Algorithms}\label{sec:algorithm}

In \AC, each agent has to identify the optimal local optimal arm and the learning process can be improved by communicating with the other agents with common arms. 
% \dt{Previous sentence is not true.  They can identify optimal arms wo communication.}
The traditional challenge for MAB comes from the exploration-exploitation dilemma. In \AC, the agents have to resolve this by designing learning algorithms with low regret and low communication complexity. 
% \dt{We need to be careful about using the term trade off.  As I understand the algorithms, none of them trade off regret against communication complexity.We should just introduce communication complexity as another performance metric} 
The heterogeneity in action rates and access to the decision set exacerbates the design and analysis of cooperative learning algorithms for \AC. In this section, we present two algorithms: \ucbo and \AAE. \ucbo generalizes the classic Upper Confidence Bound algorithm that achieves good regret but incurs high communication complexity.
\AAE borrows the idea of the arm elimination strategy but incorporates a novel communication strategy tailored to reduce communication complexity. In Section~\ref{sec:regret}, we derive a regret lower bound for \AC, analyze regrets and communication complexities for both algorithms, show the optimality of the regrets for both algorithms, and show that \AAE achieves low communication complexity.   

\subsection{Confidence Interval} 
Both \ucbo and \AAE use confidence of the mean rewards to make decisions. A simple introduction on the confidence interval is provided below. 
In \AC, each agent can compute empirical mean rewards of the arms. For arm $i \in \cS$ with $n$ observations, the mean reward is denoted as $\hat{\mu}(i,n)$\footnote{In the algorithm pseudocode, we drop $t$ and $j$ from the notations $\hat{n}^j_{t}(i)$ and ${\hat{\mu}(i,\hat{n}^j_t(i))}$ for brevity, and simplify them as $\hat{n}(i)$ and $\hat{\mu}(i)$, respectively. The precise notation, however, is used in analysis.}, which is the average of the $n$ observations on arm $i$.
With these observations, we can compute a confidence interval for the true mean reward.
Specifically, the confidence interval for arm $i$ and agent $j$ at time $t$ centers its estimated mean value, $\hat{\mu}(i,n)$, and its width is defined as
\begin{equation}
\label{eq:cint}
    \cint(i,j,t) := \sqrt{\frac{\alpha \log \delta_t^{-1}}{2\hat{n}^j_{t}(i)}},
\end{equation}
where $\hat{n}^j_{t}(i)$ is the total number of observations (including both local observations and those received from other agents) of arm $i$ available to agent $j$ by time $t$ (observations made in time slots from 1 to $t-1$).
% \footnote{\rev{Without delays, $\hat{n}_{t}(i)$ is equal to the total number of observations on arm $i$}.}. 
Here $\delta_t > 0$ and $\alpha >2$ are parameters of the confidence interval.
Consequently, we build the following confidence interval for arm $i \in \cS$:
\begin{equation*}\label{eq:con}
\begin{split}
\mu(i) \in \left[\hat{\mu}(i,\hat{n}_t^j(i))-\cint(i,j,t), \hat{\mu}(i,\hat{n}_t^j(i))+\texttt{CI}(i,j,t)\right],
\end{split}
\end{equation*}
where $\mu(i)$ satisfies the upper (or lower) bound with probability at least $1-\delta_t^\alpha$ ($ 0 < \delta_t  \leq 1$ is a specified parameter at time slot $t$). 
One can refer to \cite{bubeck2012regret} for a detailed analysis of the above confidence interval.

\subsection{\ucbo: Cooperative Upper Confidence Bound Algorithm}
\label{subsec:coucb}

In this subsection, we present \ucbo, a cooperative bandit algorithm for the \AC model. According to \ucbo, each agent selects the arm with the largest upper confidence bound. For agent $j$, there is
\begin{equation*}
    I^j_t=\max_{i\in \mathcal{K}} \hat{\mu}\left(i,\hat{n}_t^j(i)\right)+\cint(i,j,t). 
\end{equation*}

With each observation received from the selected arm or other agents, \ucbo updates the mean reward estimate and the upper confidence bound. In the meantime, each observation received from local arms are broadcast to other agents that contain the corresponding arm in their local sets. Details of \ucbo are summarized in Algorithm \ref{alg:1}.

\subsection{\AAE: Cooperative Active Arm Elimination Algorithm}
\label{subsec:coaae}

\begin{algorithm}[t]
	\caption{The \ucbo Algorithm for Agent $j$ }
	\label{alg:1}
	\footnotesize
	\begin{algorithmic}[1]
		%		\Require $t=1,2,\ldots,T$
		\State \textbf{Initialization:} $\hat{n}(i)=0$, $\hat{\mu}(i)$, $i\in \mathcal{K}_j$; $\alpha>2$, $\delta_t$.
		
		\For{each ecision round $t=l/\theta_j$ ($l \in \{1,\dots,N_j\}$)}
		\State Pull arm $I_t^j$ with the highest upper confidence bound
		\State Increase $\hat{n}(I_t^j)$ by 1
		\State Update the empirical mean value of $\hat{\mu}(I_t^j)$
%		\State Update the upper and lower confidence bounds as follows:
%		\begin{equation*}
%			u(I^j_t)\leftarrow \hat{\mu}(I^j_t) + \sqrt{\frac{\alpha \log \delta_t^{-1}}{2\hat{n}(I^j_t)}},~b(I^j_t)\leftarrow \hat{\mu}(I^j_t) - \sqrt{\frac{\alpha \log \delta_t^{-1}}{2\hat{n}(I^j_t)}}
%		\end{equation*}
		\State Broadcast $x_t(I_t^j)$ to other agents which contains arm $I_t^j$ \label{algline:broadcast}
		\EndFor
		\For{each newly received $x_t(i)$, $i\in \mathcal{K}_j$ from the past decision round} \label{algline:update}
		\State Execute Lines (4)-(5)  \label{algline:update2}
		\EndFor
	\end{algorithmic}
\end{algorithm}

\begin{algorithm}[t]
	\caption{The \AAE Algorithm for Agent $j$ }
	\label{alg:2}
	\footnotesize
	\begin{algorithmic}[1]
		%		\Require $t=1,2,\ldots,T$
		\State \textbf{Initialization:} $\hat{n}(i)=0$, $\hat{\mu}(i)$, $i\in \mathcal{K}_j$; $\alpha>2$, $\delta_t$.
		\For{each received $x_\tau(i)$, $\tau<t$, $i\in \mathcal{K}_j$ for past rounds}
		\State Execute Lines (\ref{algline:update_beg})-(\ref{algline:update_end})  
		\EndFor
		\For{each decision round $t=l/\theta_j$ ($l \in \{1,\dots,N_j\}$)}
		\State Pull arm $I_t^j$ from the candidate set as constructed in Equation~\eqref{eq:cand_set} with the least observations
		\State Increase $\hat{n}(I_t^j)$ by 1 and update the empirical mean value, $\hat{\mu}(I_t^j)$ \label{algline:update_beg}
%		\State Update the upper and lower confidence bounds as follows:
%		\begin{equation*}
%			u(I^j_t)\leftarrow \hat{\mu}(I^j_t) + \sqrt{\frac{\alpha \log \delta_t^{-1}}{2\hat{n}(I^j_t)}},~b(I^j_t)\leftarrow \hat{\mu}(I^j_t) - \sqrt{\frac{\alpha \log \delta_t^{-1}}{2\hat{n}(I^j_t)}}
%		\end{equation*}
		\State Reconstruct the candidate set based on the updated values of $\hat{n}(I_t)$  and $\hat{\mu}(I_t^j)$ by using Equation~\eqref{eq:cand_set} 
		
		\If{one arm is eliminated}
		\State Broadcast the indices of eliminated arms to other agents 
		\EndIf\label{algline:update_end}
		\If{the candidate set contains more than 1 arms}
		\State Broadcast $x_t(I_t^j)$ to other agents whose candidate set contains arm $I_t^j$ and has more than one arms
		\EndIf
		\EndFor
	\end{algorithmic}
\end{algorithm}

\AAE is independently executed by each agent and is summarized as Algorithm \ref{alg:2}. By maintaining the confidence intervals of local arms, \AAE maintains a \textit{candidate set} to track the arms likely to be the optimal local arm. The candidate set is initially the entire local set, and when the confidence interval of an arm comes to lie below that of another arm, the arm is removed from the candidate set. During execution, \AAE selects the arm with the fewest observations from the candidate set. 
With the introduction of the candidate set, \AAE avoids sending messages regarding low-reward arms resulting in a lower communication complexity than \ucbo . Details are introduced below.

\paragraph{Selection Policy for Local Arms}
Now we present details on constructing the candidate set for agent $j$. The candidate set of $j$ originally contains all arms in $\mathcal{K}_j$. Then,  \AAE eliminates those arms whose confidence intervals lie below those of other arms without further consideration, and keeps the rest in a dynamic candidate set of arms. Specifically, the candidate set $\mathcal{C}_{j,t}$ is defined in (\ref{eq:cand_set}).
\begin{equation}
	\label{eq:cand_set}
\mathcal{C}_{j,t}:=\left\{i\in \mathcal{K}_{j}:\hat{\mu}(i,\hat{n}^j_{t}(i))+\texttt{cint}(i,j,t)\geq \hat{\mu}(i',\hat{n}^j_{t}(i))-\texttt{cint}(i,j,t), ~\text{for}~\text{any}~i'\in \mathcal{K}_{j}\right\}.
\end{equation}
The agent updates the candidate set after pulling an arm and each time it receives an observation from another agent. Note that communication delays and agent action rates are heterogeneous. Hence, the recorded number of observations and empirical mean rewards vary among agents.
To balance the number of observations among different local arms, the agent at each time slot pulls the arm within its local candidate set with the least number of observations. 

\paragraph{Communication Policy}
In order to reduce the communication complexity, it is also crucial for \AAE to decide how to share information among different agents. 
During the execution of \AAE, each agent updates its candidate set with its received observations. 
When an arm is eliminated from an agent's candidate set, the agent will broadcast the index of the eliminated arm, such that all agents can track the candidate sets in others. 
In the following, we will introduce a novel communication policy tailored for the \AAE algorithm. 
The communication policy of \AAE generally follows the following two rules.

\begin{enumerate}
\item An agent only broadcasts observations to the agents whose candidate set has more than one arms and contains the corresponding arms.  
\item When there is only one arm in the candidate set, the agent will also broadcast any observations to other agents.
\end{enumerate}

By the first rule, the communication policy can avoid transmitting redundant observations to the agents which have finished the learning task or have only one arm in their candidate sets. 
In addition, the second rule can prevent the ``fast'' agents which quickly eliminated suboptimal arms from sending too many observations to the ``slow''  agents containing the arms whose means are close to the local optimal arm.
Otherwise, sending too many observations on those arms to ``slow''  agents may incur $O(T)$ communication complexity in the extreme case.

\begin{remark}
The above communication policy can be easily implemented in practical systems and work efficiently in a fully distributed environment without knowing the parameters of other agents, such as action rates.  Previous communication policies in distributed bandits, such as those in \cite{wang2020optimal,martinez2019decentralized} etc., require a centralized coordinator, which is difficult to implement in our heterogeneous setting.
We also note that, the above communication policy is not applicable to \ucbo, since each agent fails to send out explicit signals on suboptimal arms.
\end{remark}
\section{Theoretical Results}
\label{sec:regret}

In this section, we present our results for \ucbo and \AAE, respectively.
For \AC, a theoretical challenge is to characterize the regret bounds with respect to the constraint that agents can only pull arms from predetermined and possibly overlapping sets of arms.
This challenge can be tackled by incorporating the agent-specific suboptimality gaps introduced in \eqref{eq:tilde_delta} into the regret analysis.
In this section, we provide upper and lower bounds for the regrets in the \AC setting, which all depend on the agent-specific suboptimality gaps $\tilde{\Delta}_i$. Also, in this section, by policy, we mean the way that each agent determines which arm should be selected in each decision round.  Proofs are given in Section~\ref{sec:proofs}.

\subsection{An Overview of Our Results}
Let $\textsf{KL}(u,v)$ be the Kullback-Leibler divergence between a Bernoulli distribution with parameters of $u$ and $v$, i.e.,
$\textsf{KL}(u,v)= u\log (u/v)+(1-u)\log ((1-u)/(1-v)).$

\begin{theorem}\label{thm:lb}
(Regret Lower Bound for \AC) Consider a case where $\theta_j=O(1)$ for $j\in \mathcal{A}$ and a policy that satisfies ${\E\left[n_T(i)\right]=o(T^a)}$ for any set of Bernoulli reward distributions, any arm $i$ with $\tilde{\Delta}_i>0$, and any $a>0$.
 Then, for any set of Bernoulli reward distributions, the expected regret for any algorithm satisfies
	\begin{equation*}
		{\lim\inf}_{T\rightarrow \infty} \frac{\E\left[R_T\right]}{\log T} \geq \sum_{i:\tilde{\Delta}_i>0} \frac{\tilde{\Delta}_i}{\textsf{KL}(\mu_i,\mu_i+\tilde{\Delta}_i)},
	\end{equation*}
\end{theorem}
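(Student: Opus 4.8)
The plan is to prove the bound by the classical change-of-measure technique of Lai--Robbins and Burnetas--Katehakis, adapted to the cooperative, asynchronous observation model. The first step is a regret decomposition that reduces the claim to a per-arm lower bound on pull counts. Writing $n_T^j(i)$ for the number of pulls of arm $i$ by agent $j$ up to horizon $T$, a direct expansion of \eqref{eq:regret_ac} gives
\[
  \E[R_T]=\sum_{j\in\mathcal{A}}\sum_{i\in\mathcal{K}_j}\Delta(i_j^*,i)\,\E[n_T^j(i)].
\]
For every arm $i$ with $\tilde\Delta_i>0$ and every agent $j\in\mathcal{A}_{-i}$ we have $\Delta(i_j^*,i)\ge\tilde\Delta_i$ by definition \eqref{eq:tilde_delta}, so with $n_T(i):=\sum_{j\in\mathcal{A}_{-i}}n_T^j(i)$ we get $\E[R_T]\ge\sum_{i:\tilde\Delta_i>0}\tilde\Delta_i\,\E[n_T(i)]$. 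It then suffices to show $\liminf_{T\to\infty}\E[n_T(i)]/\log T\ge 1/\textsf{KL}(\mu_i,\mu_i+\tilde\Delta_i)$ for each such arm and sum.

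Second, I would run the change of measure arm by arm. Fix $i$ with $\tilde\Delta_i>0$ and a bottleneck agent $j^\dagger\in\arg\min_{j\in\mathcal{A}_{-i}}\Delta(i_j^*,i)$, so $\mu(i_{j^\dagger}^*)=\mu_i+\tilde\Delta_i$. For small $\epsilon>0$ let $\mu'$ agree with $\mu$ on all arms except arm $i$, whose mean is raised to $\mu_i'=\mu_i+\tilde\Delta_i+\epsilon<1$; since $i_{j^\dagger}^*$ was the best arm of $\mathcal{K}_{j^\dagger}$, arm $i$ becomes the unique local optimum of $j^\dagger$ under $\mu'$ and every other arm of $j^\dagger$ becomes suboptimal. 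Let $\mathbb{P}_\mu,\mathbb{P}_{\mu'}$ be the laws of the whole broadcast observation history. The two ingredients I need are a multi-agent divergence decomposition,
\[
  \textsf{KL}(\mathbb{P}_\mu,\mathbb{P}_{\mu'})=\E_\mu[N_T(i)]\,\textsf{KL}(\mu_i,\mu_i'),\qquad N_T(i):=\sum_{j\in\mathcal{A}_i}n_T^j(i),
\]
which holds because only arm $i$'s reward law is perturbed, together with the Bretagnolle--Huber inequality $\mathbb{P}_\mu(A)+\mathbb{P}_{\mu'}(A^c)\ge\tfrac12\exp(-\textsf{KL}(\mathbb{P}_\mu,\mathbb{P}_{\mu'}))$ applied to the event $A=\{n_T^{j^\dagger}(i)>N_{j^\dagger}/2\}$ that the bottleneck agent spends the majority of its $N_{j^\dagger}$ rounds, which grow linearly in $T$ under the assumed action rates, on arm $i$.

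Third, I would combine these. Under $\mu$ arm $i$ is suboptimal for $j^\dagger$, so the consistency hypothesis gives $\E_\mu[n_T^{j^\dagger}(i)]=o(T^a)$ and Markov's inequality yields $\mathbb{P}_\mu(A)=o(T^{a-1})$; under $\mu'$ all arms of $j^\dagger$ other than $i$ are suboptimal, so applying consistency in the perturbed instance gives $\sum_{i'\ne i}\E_{\mu'}[n_T^{j^\dagger}(i')]=o(T^a)$ and hence $\mathbb{P}_{\mu'}(A^c)=o(T^{a-1})$. Substituting into Bretagnolle--Huber and taking logarithms gives $\E_\mu[N_T(i)]\,\textsf{KL}(\mu_i,\mu_i')\ge(1-a)\log T-O(1)$. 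Sending $T\to\infty$, then $a\downarrow 0$ and $\epsilon\downarrow 0$ (using continuity of $v\mapsto\textsf{KL}(\mu_i,v)$) yields the per-arm bound, which plugged back into the decomposition and summed over all $i$ with $\tilde\Delta_i>0$ proves the theorem.

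The main obstacle lies in the cooperative observation structure, which is where this departs from the single-agent template. Justifying the divergence decomposition requires viewing the aggregate stream of (agent, arm, reward) triples produced by all asynchronous agents and broadcast to the group as a single canonical bandit model, and arguing that the heterogeneous action rates and deterministic communication delays enter only through the common, instance-independent dynamics, so that the relative entropy collapses to the count of draws from the single perturbed arm times $\textsf{KL}(\mu_i,\mu_i')$. The more delicate point is that the quantity produced by this argument is the \emph{total} number of observations $N_T(i)$ of arm $i$ across the system, whereas the regret charges only the suboptimal pulls $n_T(i)=\sum_{j\in\mathcal{A}_{-i}}n_T^j(i)$; the two counts coincide exactly when no agent holds $i$ as its local optimum, which is also the regime in which the consistency requirement $\E[n_T(i)]=o(T^a)$ is compatible with sublinear regret, so the cleanest route is to read the hypothesis as controlling precisely these regret-bearing pulls and to argue that the samples of $i$ needed to certify it inferior to $i_{j^\dagger}^*$ must be charged to agents in $\mathcal{A}_{-i}$.
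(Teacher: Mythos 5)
Your proposal follows the same route as the paper's proof: a change of measure on the single arm $i$, the divergence decomposition $\textsf{KL}(\mathbb{P}_{\mu},\mathbb{P}_{\mu'})=\E_{\mu}\left[N_T(i)\right]\textsf{KL}(\mu_i,\mu_i')$, and the high-probability Pinsker (Bretagnolle--Huber) inequality, followed by the consistency hypothesis applied in both the original and the perturbed instance. In several respects you are more careful than the paper: you make the bottleneck agent $j^\dagger$ and the $\epsilon$-perturbation explicit, you define the test event on $j^\dagger$'s own pull count rather than on an aggregate, and you verify that consistency is applicable in the perturbed instance.

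The one substantive issue is exactly the one you flag in your last paragraph, and your proposed resolution does not close it. The information-theoretic argument lower-bounds the total observation count $N_T(i)=\sum_{j\in\mathcal{A}_i}n_T^j(i)$, while the regret decomposition only charges $\sum_{j\in\mathcal{A}^*_{-i}}n_T^j(i)\,\Delta(i_j^*,i)$. It is not true that the samples needed to certify $i$ inferior ``must be charged to agents in $\mathcal{A}_{-i}$'': if some agent $j'\in\mathcal{A}^*_i$ holds $i$ as its local optimum, it supplies $\Theta(T)$ broadcast observations of $i$ at zero regret, the per-arm count bound becomes vacuous, and the claimed term for such an arm can genuinely fail (agent $j^\dagger$ then never needs to pull $i$ itself). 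The paper's own proof avoids this only by restricting the change of measure to arms that are ``sub-optimal for all agents,'' i.e.\ $\mathcal{A}^*_i=\emptyset$, in which case $N_T(i)$ coincides with the regret-bearing count and your argument goes through verbatim. So your write-up is correct on exactly the set of instances the paper's proof covers; to make it rigorous, state the restriction $\mathcal{A}^*_i=\emptyset$ explicitly (or argue it is effectively forced by the theorem's hypothesis, since requiring $\E\left[n_T(i)\right]=o(T^a)$ for the \emph{total} pull count would make any agent in $\mathcal{A}^*_i$ starve its own optimal arm), rather than appeal to where the certifying samples must originate.
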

The proof leverages the similar techniques of the classical result for the basic stochastic bandits~\cite{lai1985asymptotically} and is given in Subsection~\ref{subsec:lower_bound}. We proceed to introduce the following notations to facilitate the presentation of the regret bounds for both algorithms. We define
\begin{equation*}
    q_1:=2\sum_{j\in \mathcal{A}}\sum_{l=1}^{N_j}\sum_{i\in \mathcal{K}_j}\frac{l\Theta_i}{\theta_j} \delta_{l/\theta_j}^{\alpha},
\end{equation*}
\begin{equation*}
f_i(\delta):=\sum_{j\in \mathcal{A}_i}\min\left\{d_j\theta_j,\frac{2\alpha\log \delta^{-1}}{\Delta^2(i_{j}^*,i)}\right\},
\end{equation*}
where $\alpha>2$, $\delta_{l/\theta_j}$ are parameters specified by the proposed algorithms, and $\delta:=\max_l \delta_{l/\theta_j}$.

\begin{theorem}\label{thm:2}
	(Expected regret of \ucbo) With $\alpha>2$, the expected regret of the \AAE algorithm satisfies
	\begin{equation*}
	\begin{split}
		\E\left[R_T\right]   \leq\sum_{i:\tilde{\Delta}_i>0}\left(\frac{6\alpha\log \delta^{-1}}{\tilde{\Delta}_i}+q_1+f_i(\delta)\right).
	\end{split}
	\end{equation*}
\end{theorem}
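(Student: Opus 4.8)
The plan is to bound the expected regret by controlling, for each suboptimal arm $i$ (one with $\tilde{\Delta}_i>0$), the expected number of times any agent in $\mathcal{A}_{-i}$ pulls $i$. First I would decompose the aggregate regret as $\E[R_T] = \sum_{i:\tilde{\Delta}_i>0}\sum_{j\in\mathcal{A}_{-i}}\Delta(i_j^*,i)\,\E[n_T^j(i)]$, where $n_T^j(i)$ counts local pulls of arm $i$ by agent $j$. Since the regret contributed by arm $i$ is paid only by agents for whom $i$ is suboptimal, and since $\Delta(i_j^*,i)\ge\tilde{\Delta}_i$ by the definition in~\eqref{eq:tilde_delta}, the crux is to upper bound the total number of pulls of arm $i$ across all agents that share it.

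The key step is a standard UCB confidence-interval argument adapted to the cooperative and asynchronous setting. I would define the good event that, for all arms and all relevant time slots, the true mean $\mu(i)$ lies within $\hat{\mu}(i,\hat{n}_t^j(i))\pm\cint(i,j,t)$. On this event, a suboptimal pull of arm $i$ by agent $j$ can occur only while the aggregate number of observations of $i$ available to $j$ is below roughly $2\alpha\log\delta^{-1}/\Delta^2(i_j^*,i)$, because beyond that threshold the upper confidence bound of $i$ drops below $\mu(i_j^*)$ and $i$ is no longer selected. Here the cooperation is essential: because every pull of $i$ is broadcast to all agents holding $i$, the observation count $\hat{n}_t^j(i)$ grows with the \emph{aggregate} pulling rate, so the total pulls of $i$ summed over $\mathcal{A}_i$ scale like $6\alpha\log\delta^{-1}/\tilde{\Delta}_i$ rather than being paid separately by each agent; the constant $6$ absorbs the per-agent slack together with the rounding in the asynchronous schedule.

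Two correction terms then complete the bound. The probability that the good event fails is controlled by a union bound over agents, their decision rounds $l/\theta_j$, and arms, each failing with probability at most $\delta_{l/\theta_j}^{\alpha}$; carrying the factor $l\Theta_i/\theta_j$ that accounts for the number of messages in flight yields exactly the term $q_1$. The delay term $f_i(\delta)$ arises because an observation broadcast by agent $j$ reaches others only after the communication delay $d_{j_1,j_2}$, so during a window of length proportional to $d_j\theta_j$ an agent may pull $i$ before the threshold number of observations has actually arrived; taking the minimum with $2\alpha\log\delta^{-1}/\Delta^2(i_j^*,i)$ reflects that this extra penalty never exceeds the baseline exploration cost.

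The main obstacle I anticipate is the asynchronous, delayed bookkeeping: carefully arguing that aggregating observations across heterogeneous action rates $\theta_j$ and delays $d_{j_1,j_2}$ still yields a clean threshold on $\hat{n}_t^j(i)$, and that the delay-induced overshoot is captured precisely by $f_i(\delta)$ rather than inflating the leading $\log T/\tilde{\Delta}_i$ term. Everything else reduces to the classical UCB analysis of~\cite{bubeck2012regret}, but the interplay of delay and heterogeneity in the counting argument is where the work concentrates.
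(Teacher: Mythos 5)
Your overall architecture matches the paper's: a good-event/bad-event split whose failure probability is summed over agents, rounds, and arms to produce $q_1$; a confidence-radius argument showing that on the good event agent $j$ can pull suboptimal arm $i$ only while its shared count $\hat{n}_t^j(i)$ is below a threshold of order $\alpha\log\delta^{-1}/\Delta^2(i_j^*,i)$; and a delay correction captured by $f_i(\delta)$. All of that is sound and is exactly Lemmas~\ref{lem:1_ac} and~\ref{lem:2_ac} in the paper.

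The genuine gap is the step you dismiss with ``the constant $6$ absorbs the per-agent slack.'' The regret charged to arm $i$ is the \emph{weighted} sum $\sum_{j\in\mathcal{A}^*_{-i}} \E[n_T^j(i)]\,\Delta(i_j^*,i)$, and the two obvious ways to bound it both fail to give the theorem: bounding each $n_T^j(i)$ by its own threshold $2\alpha\log\delta^{-1}/\Delta^2(i_j^*,i)$ gives $\sum_j 2\alpha\log\delta^{-1}/\Delta(i_j^*,i)\le M_{-i}\cdot 2\alpha\log\delta^{-1}/\tilde{\Delta}_i$, which scales with the number of agents and defeats the whole point of cooperation; bounding the \emph{total} shared count by the loosest threshold $2\alpha\log\delta^{-1}/\tilde{\Delta}_i^2$ and multiplying by the largest gap gives $2\alpha\log\delta^{-1}\Delta(i_{j_1}^*,i)/\tilde{\Delta}_i^2$, which is worse than $1/\tilde{\Delta}_i$ by a factor $\Delta(i_{j_1}^*,i)/\tilde{\Delta}_i$. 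The paper's proof closes this with a layered counting argument you do not mention: order the agents in $\mathcal{A}^*_{-i}$ so that $\Delta(i^*_{j_1},i)\ge\cdots\ge\Delta(i^*_{j_{M_{-i}}},i)$, observe that the \emph{cumulative} number of pulls by the first $m$ agents is at most $A_m+B_m$ with $A_m=2\alpha\log\delta^{-1}/\Delta^2(i^*_{j_m},i)$ (because they all see each other's observations, the first $m$ agents collectively stop once the count exceeds the largest of their thresholds), then apply an Abel transformation to the weighted sum and bound the resulting telescoping terms by the integral $\int_{\tilde{\Delta}_i}^{\Delta(i^*_{j_1},i)} 2\alpha\log\delta^{-1}\,z^{-2}\,dz\le 2\alpha\log\delta^{-1}/\tilde{\Delta}_i$. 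That computation is where the constant $6$ and the $1/\tilde{\Delta}_i$ (rather than $1/\tilde{\Delta}_i^2$ or $M_{-i}/\tilde{\Delta}_i$) dependence actually come from; without it your sketch does not reach the stated bound.
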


By setting $\delta_t=1/t$, we have
\begin{equation*}
\begin{split}
&2\sum_{j\in \mathcal{A}}\sum_{l=1}^{N_j}\sum_{i\in \mathcal{K}_j}\frac{l\Theta_i}{\theta_j} \delta_{l/\theta_j}^{\alpha}=2\sum_{j\in \mathcal{A}}\sum_{l=1}^{N_j}\sum_{i\in \mathcal{K}_j}\Theta_i \frac{1}{(l/\theta_j)^{\alpha-1}} \\
\leq & 2\sum_{j\in \mathcal{A}}\sum_{l=1}^{N_j}\Theta \frac{1}{(l/\theta_j)^{\alpha-1}}
\leq  \frac{2}{\alpha-2}\sum_{j\in \mathcal{A}} \Theta\theta_j^{\alpha-1}. \\
\end{split}
\end{equation*}

We define \begin{equation*}
    q_2:=\frac{2}{\alpha-2}\sum_{j\in \mathcal{A}} \Theta\theta_j^{\alpha-1}.
\end{equation*}

Applying the above results and definitions to Theorem \ref{thm:2} yields the following corollary.

\begin{corollary}\label{cor:1}
	With $\delta_t=1/t$ and $\alpha>2$, the \ucbo algorithm attains the following expected regret
	\begin{equation*}
	\begin{split}
		\mathbb{E}\left[R_T\right]
		\leq  \sum_{i:\tilde{\Delta}_i>0}\left(\frac{6\alpha\log T}{\tilde{\Delta}_i}+f_i\left(\frac{1}{T}\right)+1\right)+q_2.
	\end{split}
	\end{equation*}
\end{corollary}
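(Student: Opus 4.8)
The plan is to obtain Corollary~\ref{cor:1} as a direct specialization of Theorem~\ref{thm:2} to $\delta_t=1/t$, so the argument is essentially substitution followed by collecting terms. Theorem~\ref{thm:2} already decomposes $\E[R_T]$ into the three ingredients $6\alpha\log\delta^{-1}/\tilde{\Delta}_i$, the common term $q_1$, and the delay term $f_i(\delta)$, so I would evaluate each of these under $\delta_t=1/t$ and replace $q_1$ by the closed-form quantity $q_2$. The terms carrying $\delta$ are handled first: taking the operative value at the horizon, $\delta=1/T$, so $\log\delta^{-1}=\log T$ and the leading term becomes $6\alpha\log T/\tilde{\Delta}_i$, while the delay term passes through unchanged as $f_i(1/T)=\sum_{j\in\mathcal{A}_i}\min\{d_j\theta_j,\,2\alpha\log T/\Delta^2(i_j^*,i)\}$. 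One wants the largest $\log\delta^{-1}$ here, i.e.\ the smallest $\delta_t$ over the horizon, since that is exactly what keeps the expression an upper bound; these substitutions require no estimation.

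The one genuine computation is the passage $q_1\le q_2$, which is the display immediately preceding the corollary. Here I would substitute $\delta_{l/\theta_j}=\theta_j/l$ into $q_1=2\sum_{j}\sum_{l=1}^{N_j}\sum_{i\in\cK_j}\tfrac{l\Theta_i}{\theta_j}\delta_{l/\theta_j}^{\alpha}$; after cancellation the summand collapses to $\Theta_i\,(l/\theta_j)^{-(\alpha-1)}$, and then reducing the inner arm-sum and invoking convergence of $\sum_{l\ge 1}l^{-(\alpha-1)}$ for $\alpha>2$ (controlled by $1/(\alpha-2)$) yields $q_2=\tfrac{2}{\alpha-2}\sum_{j\in\cA}\Theta\theta_j^{\alpha-1}$. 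Assembling the three specialized ingredients together with $q_1\le q_2$ into Theorem~\ref{thm:2}, and absorbing the lower-order contributions from the truncated $p$-series and the floor in $N_j=\lfloor T/\omega_j\rfloor$ into the per-arm additive constant, delivers the claimed bound $\sum_{i:\tilde{\Delta}_i>0}\bigl(6\alpha\log T/\tilde{\Delta}_i+f_i(1/T)+1\bigr)+q_2$.

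Since this is a corollary, there is no deep obstacle; the step demanding the most care is the $q_1\le q_2$ estimate. Two points there warrant attention: the convergence of the $p$-series, where the hypothesis $\alpha>2$ is exactly what guarantees a finite, horizon-independent $q_2$ rather than a term growing with $T$; and the reduction of the inner arm-sum $\sum_{i\in\cK_j}\Theta_i$, whose replacement by $\Theta$ must be carried out with the correct accounting of the overlapping local sets so that the constant $q_2$ is not underestimated.
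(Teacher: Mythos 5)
Your proposal matches the paper's own derivation: the corollary is obtained by substituting $\delta_t=1/t$ into Theorem~\ref{thm:2}, taking $\log\delta^{-1}=\log T$ in the leading and delay terms, and bounding $q_1$ by $q_2$ via exactly the cancellation to $\Theta_i(l/\theta_j)^{-(\alpha-1)}$ followed by the convergent $p$-series estimate for $\alpha>2$. The two points you flag for care --- the reduction of the inner arm-sum $\sum_{i\in\mathcal{K}_j}\Theta_i$ and the $1/(\alpha-2)$ tail bound --- are precisely the two inequalities in the display the paper places immediately before the corollary.
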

In addition to regret, we are also interested in the communication complexity of \ucbo. For simplicity, we assume that one message is needed to send an observation from an agent to another one.
The total number observations made by all agents is $\Theta T$. Then, broadcasting an observation on arm $i$ to all other agents incurs at most $M$ communications complexity. Hence, the total communication complexity of \ucbo is $O(M\Theta T)$, which is formally summarized in the following theorem.

\begin{theorem} \label{thm:comlexity1}(Communication complexity of \ucbo)
The communication complexity of \ucbo is $O(M\Theta T)$.
\end{theorem}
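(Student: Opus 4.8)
The plan is to bound the communication complexity of \ucbo by counting, separately, the two sources of messages the algorithm generates and showing each is $O(M\Theta T)$. Recall from Algorithm~\ref{alg:1} that the only communication step is Line~\ref{algline:broadcast}: every time an agent pulls an arm and observes a reward, it broadcasts that single observation to the agents whose local sets contain the pulled arm. Thus the total communication is exactly the number of (observation, recipient) pairs generated over the horizon.

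First I would count the total number of arm pulls across all agents. Agent $j$ makes a decision every $\omega_j=1/\theta_j$ rounds, so over the horizon $[1,T]$ it pulls $N_j=\lfloor T/\omega_j\rfloor \le \theta_j T$ arms. Summing over agents gives a total number of observations $\sum_{j\in\mathcal{A}} N_j \le T\sum_{j\in\mathcal{A}}\theta_j = \Theta T$, using the definition $\Theta=\sum_{j\in\mathcal{A}}\theta_j$. This is the first factor in the claimed bound.

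Next I would bound the number of recipients per broadcast. When agent $j$ pulls arm $i$, the observation is sent only to agents in $\mathcal{A}_i$ (those with $i$ in their local set), so each broadcast produces at most $|\mathcal{A}_i|=M_i \le M$ messages, since there are only $M$ agents in total. Multiplying the per-observation cost by the total number of observations yields $C_T \le M\cdot\Theta T$, i.e. $C_T = O(M\Theta T)$, which is the statement of Theorem~\ref{thm:comlexity1}.

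I do not expect any genuine obstacle here, since the bound is a direct consequence of the fixed per-round broadcasting behavior of \ucbo and involves no probabilistic or confidence-interval arguments — this is why the theorem is stated as a crude worst-case bound (\ucbo broadcasts on every pull, regardless of whether an arm is clearly suboptimal). The only point requiring mild care is the floor in $N_j=\lfloor T/\omega_j\rfloor \le \theta_j T$, which is handled by the trivial inequality $\lfloor x\rfloor \le x$; and one should note the bound is tight in order since, absent any elimination mechanism, every agent keeps broadcasting throughout the horizon. This is precisely the contrast with \AAE, whose candidate-set-based communication policy suppresses broadcasts for eliminated arms and thereby replaces the $\Theta T$ factor by the much smaller $O\!\left(\sum_{i:\tilde{\Delta}_i>0}\log T/\tilde{\Delta}^2_i\right)$.
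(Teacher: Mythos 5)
Your argument is correct and matches the paper's own reasoning exactly: the paper likewise bounds the total number of observations by $\Theta T$ and charges at most $M$ messages per broadcast to conclude $C_T = O(M\Theta T)$. Your additional care with the floor in $N_j=\lfloor T/\omega_j\rfloor$ and the restriction of recipients to $\mathcal{A}_i$ only tightens the same elementary counting argument.
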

Now, we proceed to present the regret and communication complexity of \AAE.
Similarly, we define
\begin{equation*}
g_i(\delta):=\sum_{j\in \mathcal{A}_i}\min\left\{d_j\theta_j,\frac{8\alpha\log \delta^{-1}}{\Delta^2(i_{j}^*,i)}\right\}.
\end{equation*}
We have the following theorem and corollary showing the expected regret of \AAE.

\begin{theorem}\label{thm:4}
	(Expected regret for \AAE) With $\alpha>2$, the expected regret of the \AAE algorithm satisfies
	\begin{equation*}
		\E\left[R_T\right]\leq \sum_{i:\tilde{\Delta}_i>0}\left(\frac{24\alpha\log \delta^{-1}}{\tilde{\Delta}_i}+q_1+g_i(\delta)+1\right).
	\end{equation*}
\end{theorem}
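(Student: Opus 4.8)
The plan is to bound the expected regret arm by arm, writing $\E[R_T]=\sum_{i:\tilde{\Delta}_i>0}\sum_{j\in\mathcal{A}_{-i}}\Delta(i_j^*,i)\,\E[n_T^j(i)]$, where $n_T^j(i)$ counts the pulls of the suboptimal arm $i$ by agent $j$; arms with $\tilde{\Delta}_i=0$ contribute nothing since $\mathcal{A}_{-i}=\emptyset$. First I would introduce the good event $\event$ on which every confidence interval $[\hat\mu-\cint,\hat\mu+\cint]$ (over all agents $j$, arms $i\in\mathcal{K}_j$, decision rounds, and observation counts) contains the true mean. A union bound over these failure events — where at agent $j$'s $l$-th decision round $t=l/\theta_j$ arm $i$ has at most $l\Theta_i/\theta_j$ pooled observations and each count fails with probability $\delta_{l/\theta_j}^\alpha$ on each side — produces the term $q_1$, to which I charge all regret accumulated off $\event$.

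Second, on $\event$ I would establish the elimination threshold. Since the optimal local arm $i_j^*$ has its true mean inside its interval, its upper confidence bound never drops below any lower bound, so $i_j^*$ is never eliminated from $\mathcal{C}_{j,t}$. Combining this with the selection rule — \AAE always pulls the least-sampled arm in the candidate set, so when agent $j$ pulls $i$ we have $\cint(i,j,t)\ge\cint(i_j^*,j,t)$ — the ``still in the candidate set'' inequality $\hat\mu(i)+\cint(i)\ge\hat\mu(i_j^*)-\cint(i_j^*)$ forces $4\,\cint(i,j,t)\ge\Delta(i_j^*,i)$ on $\event$, i.e. $\hat n_t^j(i)\le 8\alpha\log\delta^{-1}/\Delta^2(i_j^*,i)=:m_j$. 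Equivalently, whenever agent $j$ pulls $i$ at a pooled count $\hat n_t^j(i)=n$, its per-pull regret obeys $\Delta(i_j^*,i)\le\sqrt{8\alpha\log\delta^{-1}/n}$.

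Third — the heart of the bound — I would order all pulls of $i$ by agents in $\mathcal{A}_{-i}$ in time and charge the $k$-th such pull. In the idealized delay-free case the pooled count seen by the pulling agent is at least $k-1$, so this pull's regret is at most $\sqrt{8\alpha\log\delta^{-1}/(k-1)}$, and the total number of such pulls is at most the largest threshold $m=8\alpha\log\delta^{-1}/\tilde{\Delta}_i^2$ (the slowest agent, with gap $\tilde{\Delta}_i$, eliminates last). Summing $\sum_k (k-1)^{-1/2}\le 2\sqrt{m}$ then collapses the $\log/\tilde{\Delta}_i^2$ count into the $\log/\tilde{\Delta}_i$ regret, yielding the leading $O(\alpha\log\delta^{-1}/\tilde{\Delta}_i)$ term (with the stated constant $24$ and the $+1$ absorbing the first pull and the crude integral bound on $\sum k^{-1/2}$). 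Crucially this exploits cooperation: the $1/\sqrt n$ decay holds because shared observations make the pooled count, not any single agent's count, grow with $k$.

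Finally, I would add back the effect of communication delay, which is where I expect the real difficulty. Because observations reach agent $j$ only after $d_j$ rounds, its local pooled count $\hat n_t^j(i)$ lags the true global count, so agent $j$ may keep $i$ in its candidate set and pull it for up to $d_j\theta_j$ extra decision rounds past the delay-free elimination point; capping these by the threshold $m_j$ and summing over $j\in\mathcal{A}_i$ gives precisely $g_i(\delta)$ (each extra pull charged at most $\Delta\le 1$). Making this rigorous is the main obstacle: the no-delay indexing argument of Step 3 must be reconciled with asynchronous action rates $\theta_j$ and per-pair delays, so that the extra pulls are cleanly separated from, and added on top of, the $\sum 1/\sqrt n$ sum without double counting. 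Assembling the good-event contribution, the bad-event term $q_1$, and the delay correction $g_i(\delta)$ over all $i$ with $\tilde{\Delta}_i>0$ then yields the claimed bound.
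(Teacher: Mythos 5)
Your proposal is correct and shares the paper's overall skeleton — the good-event/bad-event split yielding $q_1$ (the paper's Type-\uppercase\expandafter{\romannumeral1}/Type-\uppercase\expandafter{\romannumeral2} decisions), the elimination-threshold lemma $\hat n_t^j(i)\le 8\alpha\log\delta^{-1}/\Delta^2(i_j^*,i)$ derived from the least-observations selection rule exactly as in the paper's Lemma for \AAE, and the per-agent $\min\{d_j\theta_j,\cdot\}$ delay correction giving $g_i(\delta)$. Where you genuinely diverge is the central aggregation step. The paper orders the agents of $\mathcal{A}^*_{-i}$ by decreasing $\mu(i^*_{j_m})$, bounds the cumulative pull count of the first $m$ agents by $A_m+B_m$, and applies an Abel transformation (summation by parts) with the integral $\int 2\alpha\log\delta^{-1}/z^2\,dz$ over the gap axis, arriving at $24\alpha\log\delta^{-1}/\tilde{\Delta}_i$. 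You instead order the pulls of $i$ chronologically, bound the $k$-th pull's instantaneous regret by $\sqrt{8\alpha\log\delta^{-1}/(k-1)}$ (valid because the pooled count at the pulling agent dominates $k-1$ in the delay-free case, whichever agent pulls), and sum $\sum_k(k-1)^{-1/2}\le 2\sqrt{m}$ with $m=8\alpha\log\delta^{-1}/\tilde{\Delta}_i^2$. These are dual integrations of the same curve $n\mapsto\sqrt{c/n}$; your version is arguably more elementary, handles the agent heterogeneity implicitly rather than via an explicit ordering, and actually yields the sharper constant $16$ rather than $24$, so it certainly implies the stated bound.

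Two minor caveats. First, you correctly flag that reconciling the chronological-count argument with delays is the delicate point: with outstanding observations, the $k$-th pull sees a pooled count of at least $k-1$ minus the in-flight observations, so the $d_j\theta_j$ "extra" pulls must be peeled off before the $\sum_k (k-1)^{-1/2}$ bound is applied. The paper handles this no more rigorously (it simply adds the outstanding-observation term $B_m$ to the count bound), so your treatment is at parity with the published argument. Second, the agent that eliminates last is the one with the \emph{smallest gap} $\Delta(i_j^*,i)=\tilde{\Delta}_i$, not the "slowest" one in the action-rate sense; this is only a terminology slip and does not affect the argument.
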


\begin{corollary}\label{cor:2}
	With $\alpha>2$ and $\delta_t=1/t$, the \AAE algorithm attains the following expected regret
	\begin{equation*}
	\begin{split}
		\mathbb{E}\left[R_T\right]
		\leq \sum_{i:\tilde{\Delta}_i>0}\left(\frac{24\alpha\log T}{\tilde{\Delta}_i}+g_i\left(\frac{1}{T}\right)+1\right)+q_2.
	\end{split}
	\end{equation*}
\end{corollary}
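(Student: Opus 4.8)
The plan is to treat Corollary~\ref{cor:2} as a direct specialization of Theorem~\ref{thm:4} under the schedule $\delta_t = 1/t$, exactly paralleling how Corollary~\ref{cor:1} is obtained from Theorem~\ref{thm:2}. Theorem~\ref{thm:4} bounds $\E[R_T]$ by $\sum_{i:\tilde\Delta_i>0}\bigl(24\alpha\log\delta^{-1}/\tilde\Delta_i + q_1 + g_i(\delta) + 1\bigr)$, so the only task is to evaluate the three $\delta$-dependent quantities $\log\delta^{-1}$, $q_1$, and $g_i(\delta)$ at $\delta_t = 1/t$ and repackage them. No new probabilistic argument is needed; the work is a deterministic substitution plus one series estimate.

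First I would handle $q_1$, the only genuinely nontrivial step. Substituting $\delta_{l/\theta_j} = \theta_j/l$ gives $\delta_{l/\theta_j}^\alpha = \theta_j^\alpha/l^\alpha$, so each summand $\frac{l\Theta_i}{\theta_j}\delta_{l/\theta_j}^\alpha$ collapses to $\Theta_i(l/\theta_j)^{-(\alpha-1)}$. This is exactly the chain of inequalities already displayed just before the definition of $q_2$: one bounds $\sum_{i\in\cK_j}\Theta_i$ by $\Theta$ and then sums the $p$-series $\sum_{l\ge 1} l^{-(\alpha-1)}$, which converges because $\alpha>2$ forces the exponent $\alpha-1>1$, an integral comparison then producing the factor $\frac{1}{\alpha-2}$. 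Hence $q_1 \le q_2 = \frac{2}{\alpha-2}\sum_{j\in\cA}\Theta\theta_j^{\alpha-1}$. Since this estimate is independent of whether the ambient algorithm is \ucbo or \AAE, I would invoke it verbatim. The additive $+1$ per arm is not produced at this step; it is already present inside the sum in Theorem~\ref{thm:4} and simply carries through.

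Next I would dispatch the two remaining substitutions. Evaluated at the horizon the confidence parameter is $\delta = 1/T$, so $\log\delta^{-1} = \log T$ and the leading term becomes $24\alpha\log T/\tilde\Delta_i$; likewise $g_i(\delta)$ becomes $g_i(1/T)$ with no further manipulation, since $g_i$ is already written as a function of its argument. Assembling the pieces---the leading $\log T$ term together with $g_i(1/T)$ and the inherited $+1$ inside the sum, and the single $q_2$ pulled outside the sum---reproduces the claimed bound $\sum_{i:\tilde\Delta_i>0}\bigl(24\alpha\log T/\tilde\Delta_i + g_i(1/T) + 1\bigr) + q_2$.

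The main obstacle, such as it is, is bookkeeping rather than mathematics: one must apply the series bound with the correct exponent $\alpha-1$ so that convergence (and hence the $\frac{1}{\alpha-2}$ factor in $q_2$) is legitimate, and one must read the $\delta$ in $\log\delta^{-1}$ as the terminal value $1/T$ rather than the largest early value of $\delta_t$. Because Corollary~\ref{cor:1} already carries out the identical reduction for \ucbo, the entire argument for \AAE differs only in the constant ($24$ in place of $6$) and in using $g_i$ rather than $f_i$, so I would present it as a one-line appeal to Theorem~\ref{thm:4} combined with the previously displayed computation of $q_2$.
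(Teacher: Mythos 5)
Your proposal is correct and follows essentially the same route the paper takes: Corollary~\ref{cor:2} is obtained from Theorem~\ref{thm:4} by the identical substitution $\delta_t=1/t$ that the paper carries out explicitly when deriving Corollary~\ref{cor:1} from Theorem~\ref{thm:2}, reusing the displayed $p$-series estimate $q_1\leq q_2$ and reading $\log\delta^{-1}$ as $\log T$. Your remark that $\delta$ must be taken as the terminal value $1/T$ (rather than the literal $\max_l\delta_{l/\theta_j}$ in the paper's definition) is a fair and accurate reading of what the bound actually requires.
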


\begin{theorem} \label{thm:comlexity2}(Communication complexity of \AAE) Let $\delta_t=1/t$ and $\alpha>2$.
The communication complexity of \AAE satisfies
	\begin{equation*}
	C_T\leq	\sum_{i\in \mathcal{K}}\left(\frac{8\alpha\log T}{\tilde{\Delta}^2_i}+\sum_{j\in \mathcal{A}^*_{-i}}d_j\theta_j+q_2+1\right)(M+M_i).
	\end{equation*}
\end{theorem}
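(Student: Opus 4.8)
The plan is to split $C_T$ into the two kinds of messages dictated by the communication policy of \AAE and to bound each kind arm by arm before summing over $i\in\mathcal{K}$. The first kind are \emph{observation messages}, an arm index together with a reward, sent when an agent pulls an arm while its candidate set still has more than one arm; the second are \emph{elimination messages}, a single arm index, sent once by an agent when it removes an arm from its candidate set. The factor $(M+M_i)$ should emerge as the per-observation fan-out: an observation on arm $i$ is forwarded only to agents that can access $i$ and still hold it as a live candidate, i.e.\ at most $M_i$ recipients, whereas the elimination of arm $i$ is announced to all agents, i.e.\ at most $M$ recipients. By charging each elimination of arm $i$ to the observation that triggered it, I can pay at most $M+M_i$ per relevant observation of arm $i$, after which it only remains to bound the number $B_i$ of such observations.

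Next I would bound $B_i$ using the same arm-elimination analysis behind Theorem~\ref{thm:4}. Because \AAE always pulls the least-observed arm in the candidate set, the observation counts on the live candidates stay balanced; hence for any agent $j$ for which $i$ is suboptimal, arm $i$ leaves $\mathcal{C}_{j,t}$ of \eqref{eq:cand_set} once the confidence interval of $i$ and that of the local optimum $i_j^*$ each shrink below the $\Delta(i_j^*,i)/4$ scale. Using the width \eqref{eq:cint} with $\delta_t=1/t$, this needs at most $\tfrac{8\alpha\log T}{\Delta^2(i_j^*,i)}$ shared observations, and since the binding agent is the one realizing the minimum gap, the leading term becomes $\tfrac{8\alpha\log T}{\tilde{\Delta}_i^2}$. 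To this I would add two corrections exactly mirroring the definitions of $g_i$ and $q_2$: a delay term $\sum_{j\in\mathcal{A}^*_{-i}} d_j\theta_j$ for the extra pulls an agent makes on $i$ during the $d_j$-round window before the decisive observations reach it, and the bad-event term $q_2$ for the rounds on which \eqref{eq:cint} fails to contain $\mu(i)$, controlled by $\sum_t \delta_t^{\alpha}$ as in the derivation of $q_2$; the trailing $+1$ absorbs the ceiling. This yields $B_i\le \tfrac{8\alpha\log T}{\tilde{\Delta}_i^2}+\sum_{j\in\mathcal{A}^*_{-i}} d_j\theta_j+q_2+1$, and combining with the fan-out gives $B_i(M+M_i)$ per arm and the claimed sum.

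The main obstacle is the second step, and within it the interaction of the two ``more than one arm'' conditions with the delays. I must argue that once arm $i$ has been eliminated from the candidate set of every agent in $\mathcal{A}^*_{-i}$, no further observation on $i$ is ever broadcast: agents in $\mathcal{A}^*_i$ eventually hold $i$ as their unique candidate and are excluded as recipients by the first rule, while a sender whose candidate set is a singleton contributes nothing new under the second rule. This is what forces the count $B_i$ to terminate at the $\tilde{\Delta}_i$ threshold rather than growing to $O(T)$. The most delicate accounting is the delay bookkeeping, namely ensuring that observations in flight during the $d_j$-round windows are counted at most once and are correctly attributed to the originating pull, so that the delay correction is precisely $\sum_{j\in\mathcal{A}^*_{-i}} d_j\theta_j$ and no double counting inflates the bound.
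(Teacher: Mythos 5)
There is a genuine gap, and it sits exactly where you flagged your ``main obstacle'': the broadcasts of observations of an arm $i$ made by agents in $\mathcal{A}^*_i$, i.e.\ agents for which $i$ is the \emph{local optimal} arm. Your bound $B_i\le \tfrac{8\alpha\log T}{\tilde{\Delta}_i^2}+\sum_{j\in\mathcal{A}^*_{-i}}d_j\theta_j+q_2+1$ controls only the observations needed before $i$ is eliminated from every candidate set in $\mathcal{A}^*_{-i}$. But after that point, agents in $\mathcal{A}^*_i$ keep pulling $i$ forever, and they keep broadcasting those observations to one another as long as (sender and recipient) candidate sets are non-singletons --- which lasts until \emph{their other local arms} are eliminated. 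The number of such broadcasts is governed by the gaps $\Delta(i,i')$ of those other arms $i'\in\mathcal{K}_j$, not by $\tilde{\Delta}_i$: take $\tilde{\Delta}_i$ large but give some $j\in\mathcal{A}^*_i$ a competitor $i'$ with $\Delta(i,i')$ tiny; since \AAE pulls the least-observed candidate, $j$ pulls and broadcasts $i$ on the order of $\tfrac{8\alpha\log T}{\Delta^2(i,i')}\gg\tfrac{8\alpha\log T}{\tilde{\Delta}_i^2}$ times. So the claim that ``once $i$ is eliminated from every $\mathcal{A}^*_{-i}$ candidate set, no further observation on $i$ is ever broadcast'' is false; the word ``eventually'' in your argument concedes exactly the traffic that needs to be counted.

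The paper closes this by a different charging scheme, and this is also where the $M$ in the factor $(M+M_i)$ actually comes from: because the selection rule balances observation counts within the candidate set, the number of rounds in which an agent broadcasts an observation of its local optimal arm is at most the number of rounds in which it pulls a suboptimal arm; these rounds are then charged to the suboptimal arms (whose counts are bounded by Lemma~\ref{lem:2_ac}) with a fan-out of $M$, while the suboptimal-arm observations themselves get fan-out $M_i$. You instead spend the entire $M\cdot B_i$ budget on elimination announcements. Those are harmless --- each agent eliminates each arm at most once, so elimination messages total $O(KM^2)$ independently of $T$ and are essentially absorbed by the $+1$ terms --- but reallocating $M$ to them leaves the optimal-arm observation traffic uncovered. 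The first half of your argument (per-observation fan-out $M_i$, elimination threshold $\tfrac{8\alpha\log T}{\Delta^2(i_j^*,i)}$ from \cint with $\delta_t=1/t$, the delay and bad-event corrections) matches the paper's step for suboptimal arms; you need to add the paper's second step, or an equivalent argument, to account for the locally-optimal-arm messages.
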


\subsection{Discussions}

\paragraph{Regret Optimality of Proposed Algorithms}
The first observation regarding Corollary~\ref{cor:1} and \ref{cor:2} is that the regrets linearly depend on the delay when it is not too large.
Generally, $f_i(1/T)$ and $g_i(1/T)$ relate to the number of outstanding observations that have not yet arrived.
Considering the fact that $\theta_j\leq 1$, and $\text{KL}(\mu_i,\mu_i+\tilde{\Delta}_i)$ satisfies
\begin{equation}\label{eq:bd_kl}
2\tilde{\Delta}^2_i\leq \text{KL}(\mu_i,\mu_i+\tilde{\Delta}_i) \leq \frac{\tilde{\Delta}^2_i}{(\mu(i)+\tilde{\Delta}_i)(1-\mu(i)-\tilde{\Delta}_i)},
\end{equation}
one can easily observe that both regrets match the regret lower bound when delays are bounded by a constant.

\paragraph{Comparison with Policies without Cooperation} Without cooperation, we can build up a lower bound for the regret of each agent $j$ by Theorem 2.2 in \cite{bubeck2012regret}, that is
\begin{equation*}
	{\lim\inf}_{T\rightarrow \infty} \frac{\E\left[R_T^j\right]}{\log T} \geq \sum_{i\in \mathcal{K}_j:\Delta(i_j^*,i)>0} \frac{\Delta(i_j^*,i)}{\text{KL}(\mu_i,\mu_i+\Delta(i_j^*,i))}.
\end{equation*}
Combined with Equation (\ref{eq:bd_kl}), the best regret that any non-cooperative algorithm can achieve for the integrated system is not better than
\begin{equation*}
\sum_{j\in \mathcal{A}}\sum_{i:i\in \mathcal{K}_j,\Delta(i_j^*,i)>0}\frac{\log T}{\Delta(i_j^*,i)}=\sum_{i\in \mathcal{K}}\sum_{j\in \mathcal{A}_i/\mathcal{A}^*_i}\frac{\log T}{\Delta(i_j^*,i)}.
\end{equation*}

%where the left hand side of the above inequality corresponds to the lower bound regret of any non-cooperative bandit algorithm.
Note that, with bounded delays, the regret upper bound of either \ucbo or \AAE is $O\left(\sum_{i:\tilde{\Delta}_i>0} \log T/\tilde{\Delta}_i\right)$.
By the definition of $\tilde{\Delta}_i$ in Equation~\eqref{eq:tilde_delta}, we have
\begin{equation*}
O\left(\sum_{i\in \mathcal{K}}\sum_{j\in \mathcal{A}_i/\mathcal{A}^*_i}\frac{\log T}{\Delta(i_j^*,i)}\right)\geq O\left(\sum_{i:\tilde{\Delta}_i>0}\frac{\log T}{\tilde{\Delta}_i}\right).
\end{equation*}

To conclude, a non-cooperative strategy will have a much larger regret than \ucbo and \AAE, especially when the number of agents is large.

\paragraph{Comparison in a Special Case without Action Constraints}
	Theorem~\ref{thm:2} and \ref{thm:4} show that the regret upper bounds depend on the new suboptimality parameter $\tilde{\Delta}_i$, which measures the minimum gap between arm $i$ and the local optimal arm.
	Intuitively, the closer the expected reward of the local optimal arm to that of the global optimal arm, the smaller the regret will be.
	Specifically, in the special case where each agent can access the global arm set and delays are bounded, we have $\tilde{\Delta}_i=\Delta_i$ and thus the expected regret of either \ucbo or \AAE becomes	$O\left(\sum_{i\in \mathcal{K}}(\alpha\log T)/\Delta_{i}\right)$.
	In the basic bandit model, a learning algorithm suffers a similar regret lower bound that depends on $\Delta_i$, i.e.,
\begin{equation*}
{\lim\inf}_{T\rightarrow \infty} \frac{\E\left[R_T\right]}{\log T} \geq \sum_{i:{\Delta}_i>0} \frac{{\Delta}_i}{\text{KL}(\mu_i,\mu_i+{\Delta}_i)}.
\end{equation*}
Thus, by assuming a constant delay and $\Theta$, the regret matches the lower bound in the special case.

\paragraph{Performance with Large Delays} We take \AAE as an example. In the extreme case where the maximum delay is arbitrarily large, the regret bound given in Corollary \ref{cor:1} becomes
\begin{equation*}
\begin{split}
	\mathbb{E}\left[R_T\right]\leq&\sum_{i:\tilde{\Delta}_i>0}\left(\frac{16\alpha\log T}{\tilde{\Delta}_i}+\sum_{j'\in \mathcal{A}_i}\frac{8\alpha\log T}{\Delta(i_{j'}^*,i)}\right)+q_2+K\\
=&\sum_{i:\tilde{\Delta}_i>0}\frac{16\alpha\log T}{\tilde{\Delta}_i}+\sum_{j\in \mathcal{A}}\sum_{i\in \mathcal{K}_j}\frac{8\alpha\log T}{\Delta(i_{j}^*,i)}+q_2+K,
\end{split}
\end{equation*}
where the second term dominates and the above regret matches  that of non-cooperative learning algorithms.

\paragraph{Communication Complexity}
The regrets of \ucbo and \AAE both drop the heavy dependency on the number of agents, but they incur much different communication overheads.
By our results, \AAE possesses much lower communication complexity than \ucbo, which is $O\left(\sum_{i\in \mathcal{K}}(M\alpha\log T)/\tilde{\Delta}^2_i\right)$. We leave it as an open problem to design the algorithm which simultaneously attains the lowest communication complexity and the regret independent of the number of agents.

\section{Numerical Experiments}\label{sec:exp}
In this section, we illustrate the performance of our proposed algorithms for the \AC settings through numerical experiments. For \AC, our goal is to evaluate the performance of \ucbo and \AAE, including regret and communication complexity, and compare them to that of  non-cooperative algorithms where each agent uses only its local observations to find the best arm. Then, we investigate the impact of communication delay on the performance of proposed algorithms in \AC.

\subsection{Overview of Setup}
We assume there are $K=100$ arms with Bernoulli rewards with average rewards uniformly randomly taken from \textit{Ad-Clicks} \cite{Kaggle2015}. %\dt{are there exactly 100 rewars in Ad-Clicks?  If not, how do you select the rewards?}. 
In experiments, we report the cumulative regret after 30,000 rounds, which corresponds to the number of decision rounds of the fastest agent. All reported values are averaged over 10 independent trials and standard deviations are plotted as shaded areas. The allocation of arms to agents and number of agents differ in each experiment as explained in the corresponding sections.

\subsection{Experimental Results}
\begin{figure*}
\centering
\begin{subfigure}[Cumulative regrets with different number of agents.]{
	\centering
	\includegraphics[width=0.3\textwidth]{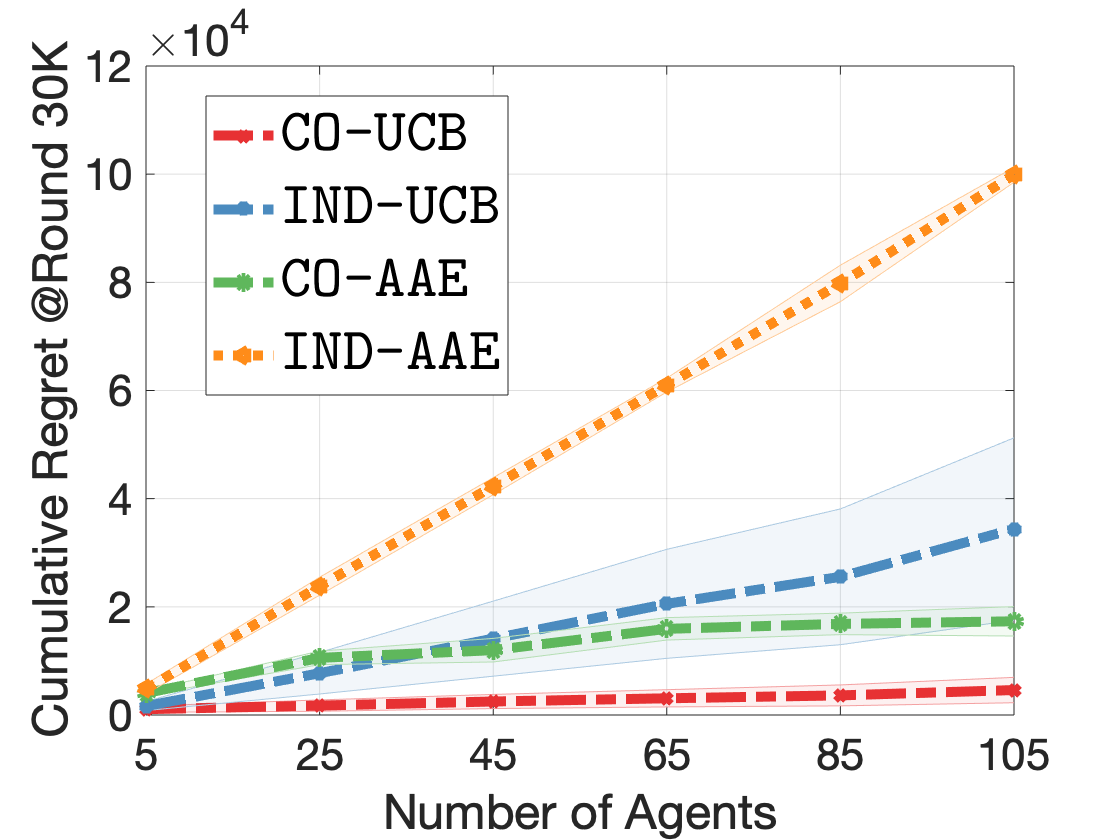}
	\label{fig:exp_co_aae_2}}
\end{subfigure}
\begin{subfigure}[Average per-agent regret with different numbers of agents.]{
\centering
\includegraphics[width=0.3\textwidth]{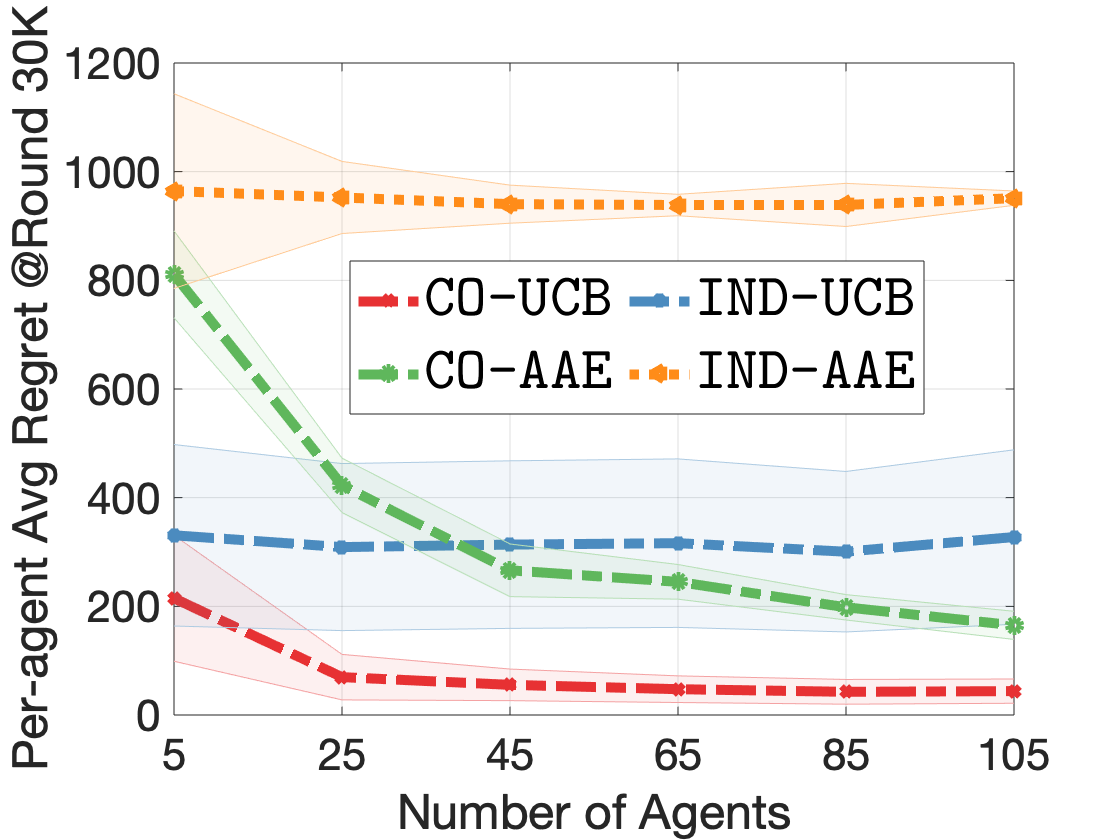}\label{fig:exp_co_aae_1}}
\end{subfigure}
\begin{subfigure}[Communication overhead.]{
\centering
\includegraphics[width=0.3\textwidth]{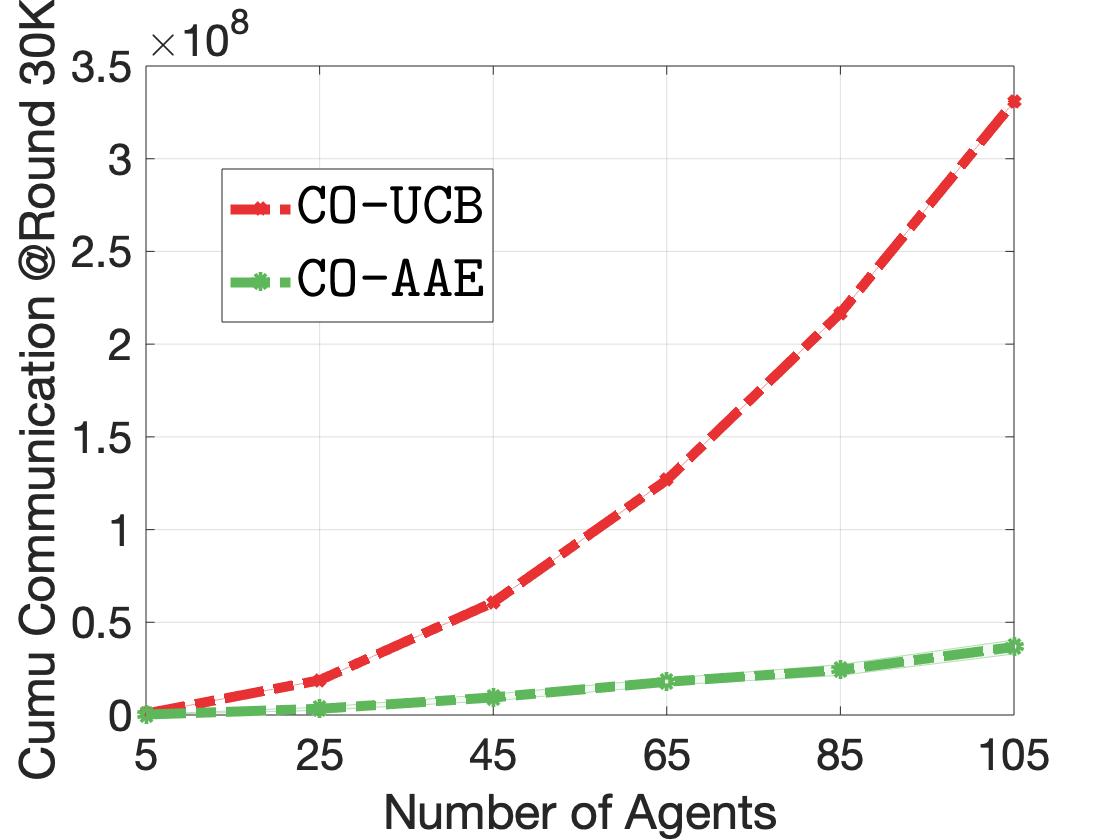}\label{fig:comm}}
\end{subfigure}
\caption{Simulation results for \AC with different number of agents in the system.}\label{fig:exp_1}
\end{figure*}

\begin{figure*}
\centering
\begin{subfigure}[Per-agent regret by \ucbo and \IUCB.]{
\centering
 \includegraphics[width=0.4\textwidth]{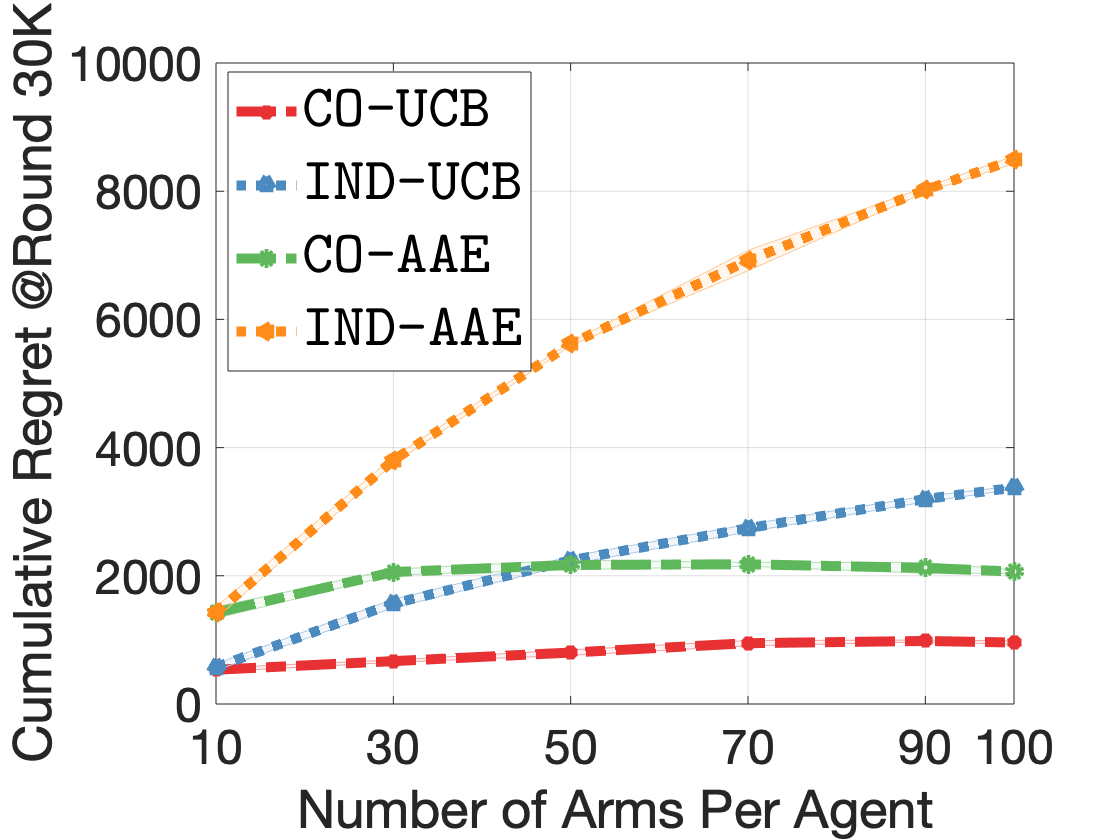}\label{fig:exp_co_aae_3}}
\end{subfigure}
\begin{subfigure}[Performance of \AAE with different delays.]{
\centering
\includegraphics[width=0.4\textwidth]{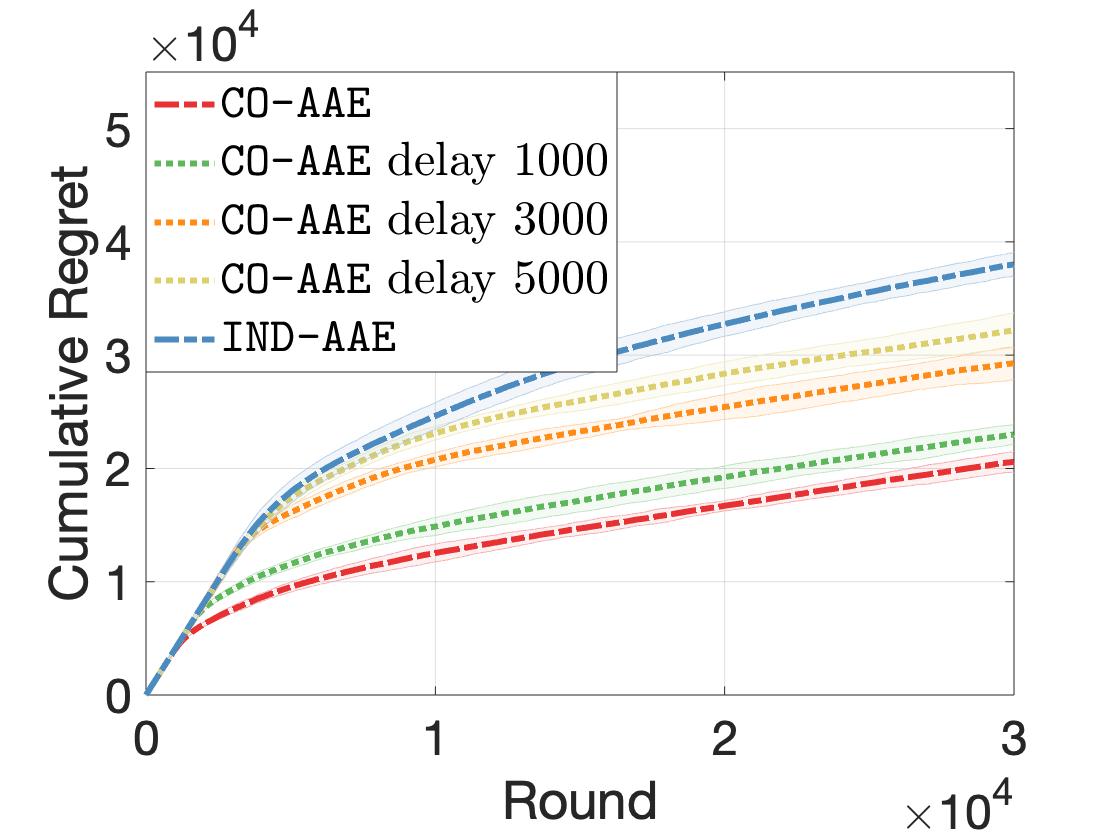}\label{fig:exp_co_aae_4}}
\end{subfigure}
\caption{Simulation results for \AC with different number of arms in each agent.}
\end{figure*}

\paragraph{Experiment 1} In the first experiment, we fix the total number of arms to $K=20$, and fix the number of arms per agent to $|\cS_j| = 6, j\in\mathcal{A}$.  We further vary the number of agents from $M=5$ (light overlap) to $M=105$ (heavy overlap), with step size of 20. 

The results are shown in Figure~\ref{fig:exp_1}. We see the observation of better performance of \ucbo and \AAE as compared to \IUCB and \IAAE. Figure~\ref{fig:exp_co_aae_2} shows a rapid increase in the cumulative regret of non-cooperative algorithms, while that of the cooperative algorithms remains the same despite the increase in the number of agents when the number of agents is larger than 65.
Figure~\ref{fig:exp_co_aae_1} depicts almost no change in the average per-agent regret of \IUCB and \IAAE, and a significant decrease for that of \ucbo and \AAE, that is due to greater overlap in the local arm sets. 

Figure~\ref{fig:comm} shows an increase of communication overheads for both \ucbo and \AAE. 
Specifically, the \AAE algorithm incurs much lower communication overhead than \ucbo in all experiments, validating our results in Theorem \ref{thm:comlexity1} and \ref{thm:comlexity2}. In addition, another important observation is that, with more agents, the communication overheads for both algorithms increase more and more quickly. That is because, when there are more agents, there will be more possibility for agents to cooperate, with more observations exchanged on common arms.

\paragraph{Experiment 2} In the second experiment, we set $K=100$ arms, and $M=10$ agents, and vary the number of arms in each agent $j$ from $|\cS_j| = 10$ with no overlap, to four partially overlapped cases, i.e., $|\cS_j| = \{30, 50, 70, 90\}$, to $|\cS_j| = 100$ which represents the complete overlap. The cumulative regret at 30000 rounds for five cases are reported in Figure~\ref{fig:exp_co_aae_3}. Figure~\ref{fig:exp_co_aae_3} shows that cooperative algorithms significantly outperform non-cooperative algorithms in general cases. One can see the gap between the performance of cooperative and non-cooperative algorithms increases as the overlap increases. This observation depicts that \ucbo and \AAE benefits from cooperation. 

\paragraph{Experiment 3} Last, we investigate the performance of the cooperative algorithm with different delays and take \AAE as an example. Toward this, we consider three additional scenarios with average delays of 1000, 3000 and 5000 slots. 
At each time slot, the exact delay is taken uniformly randomly in a given region. In Figure~\ref{fig:exp_co_aae_4}, we report the evolution of cumulative regret of \AAE. The results show that the regret of \AAE for \AC increases and approaches the regret of \IAAE as the delay increases.
\section{Conclusion}

In this paper, we study the cooperative stochastic bandit problem with heterogeneous agents, with two algorithms, \ucbo and \AAE proposed. Both algorithms attain the optimal regret, which is independent of the number of agents.
However, \AAE outperforms \ucbo in communication complexity: \AAE needs to send $O\left(\sum_{i\in \mathcal{K}}(M\alpha\log T)/(\tilde{\Delta}^2_i)\right)$ amount of messages, while the communication complexity of \ucbo is $O(M\Theta T)$. 
This paper also motivates several open questions. A promising and practically relevant work is to design the algorithm which simultaneously attains the lowest communication complexity and the regret independent of the number of agents.
%\vspace{-1mm}
%\input{conclusion.tex}

%\input{related_work}

%\vspace{-1mm}
\small
\bibliographystyle{abbrv}
\bibliography{main}

\appendix
\section{Proofs}\label{sec:proofs}
To facilitate our analysis, we define $n^j_t(i)$ as the number of times agent $j$ observes arm $i$, $i\in \mathcal{A}_j$, up to time slot $t$, and $n_t(i)$ is the total number of observations on arm $i$ by pulling it locally. Indeed, we have $n_{t}(i)=\sum_{j}n^j_{t}(i)$. Note that in the algorithm design, we also define $\hat{n}^j_t(i)$ as the total number of observations of arm $i$ for agent $j$ either by pulling locally, or by receiving the information from other agents.

\subsection{A Proof of Theorem~ \ref{thm:lb}}
\label{subsec:lower_bound}

The techniques for the proof of the lower bound in the basic setting have been investigated extensively and can be applied to \AC by slight modification.
For the completion of analysis, we provide the details as follows.
Let us define $\mathcal{E}_K$ as the class of $K$-armed \AC where each arm has a Bernoulli reward distribution. Assume that policy $\pi$ is consistent over $\mathcal{E}_K$, i.e., for any bandit problem $\nu\in \mathcal{E}_K$ and any $\sigma>0$, whose regret satisfies
\begin{equation*}
    R_{T}(\pi,\nu)=O((T\Theta)^{\sigma}),~\text{as}~T\rightarrow+\infty.
\end{equation*}

Let $\nu=[P_1,P_2,\ldots,P_K]$ and $\nu'=[P'_1,P'_2,\ldots,P'_K]$ be two reward distributions such that $P_k=P'_k$ except for $k=i$, where $i$ is a sub-optimal arm for all agents.
Specifically, we choose $P'_{i}=\mathcal{N}(\mu_{i}+\lambda)$ and $\lambda>\Delta_{i}$.
For stochastic bandits, we have the following divergence decomposition equation (one can refer to \cite{basu19privacy} for more details).
\begin{equation*}
    \text{KL}(\mathbb{P}_{\nu,\pi},\mathbb{P}_{\nu',\pi})=\mathbb{E}_{\nu,\pi}\left[n_{T}(i)\right]\text{KL}(P_i,P'_i),
\end{equation*}
where $\mathbb{P}_{\nu,\pi}$ is the distribution of
$T$-round action-reward histories induced by the interconnection between policy $\pi$ and the environment $\nu$, and $\text{KL}(\mathbb{P}_{\nu,\pi},\mathbb{P}_{\nu',\pi})$ measures the relative entropy between $\mathbb{P}_{\nu,\pi}$ and $\mathbb{P}_{\nu',\pi}$.
In addition, from the high-probability Pinsker inequality, we have
\begin{equation*}
    \text{KL}(\mathbb{P}_{\nu,\pi},\mathbb{P}_{\nu',\pi})\geq \log \frac{1}{2\left(\mathbb{P}_{\nu,\pi}(A)+\mathbb{P}_{\nu',\pi}(A^{c})\right)},
\end{equation*}
where $A$ is any event defined over $\mathbb{P}_{\nu,\pi}$ and $\mathbb{P}_{\nu',\pi}$.
By definition, the regret of policy $\pi$ over $\nu$ and $\nu'$ satisfies
\begin{equation*}
    R_T(\nu,\pi)\geq \frac{T\Delta_i}{2}\mathbb{P}_{\nu,\pi}\left(n_T(i)\geq \frac{T\Theta}{2}\right),
\end{equation*}
\begin{equation*}
    R_T(\nu',\pi)\geq \frac{T(\lambda-\Delta_i)}{2}\mathbb{P}_{\nu',\pi}\left(n_T(i)< \frac{T\Theta}{2}\right).
\end{equation*}
The above equation bases on the fact that the suboptimality gaps in $\nu'$ is larger than $\lambda - \Delta_i$.

Concluding the above two equations and lower bounding $\Delta_i$ and $(\lambda-\Delta_{i})/2$ by $\kappa(\Delta_{i},\lambda):=\min\{\Delta_{i},\lambda-\Delta_{i}\}/2$ yields
\begin{equation*}
\begin{split}
    &\mathbb{P}_{\nu,\pi}\left(n_T(i)\geq \frac{T\Theta}{2}\right)+\mathbb{P}_{\nu',\pi}\left(n_T(i)< \frac{T\Theta}{2}\right)\\
    & \leq \frac{R_T(\nu,\pi)+R_T(\nu',\pi)}{\kappa(\Delta_{i},\lambda)T}.
\end{split}
\end{equation*}

We have
\begin{equation*}
\begin{split}
    &\text{KL}(P_i,P'_i)\mathbb{E}_{\nu,\pi}\left[n_{T}(i)\right] \\ \geq &\log\left(\frac{\kappa(\Delta_{i},\lambda)}{2}\frac{T\Theta}{R_T(\nu,\pi)+R_T(\nu',\pi)}\right)\\
    =&\log(\frac{\kappa(\Delta_{i},\lambda)}{2})+\log(T\Theta)-\log(R_T(\nu,\pi)+R_T(\nu',\pi))\\
    \geq & \log(\frac{\kappa(\Delta_{i},\lambda)}{2})+(1-\sigma)\log(T\Theta)+C,
\end{split}
\end{equation*}
where $C$ is a constant.
The last inequality is based on the assumption that the algorithm is consistent.
Taking $\lambda=\Delta_i$, for large $T$, we can lower bound the regret of any consistent policy $\pi$ as follows:
\begin{equation*}
\begin{split}
    \liminf\limits_{T\rightarrow +\infty} \frac{R_T}{\log(T\Theta)}\geq &\liminf\limits_{T\rightarrow +\infty} \frac{\sum_{i}\mathbb{E}_{\nu,\pi}\left[n_{T}(i)\right]\Delta_i}{\log(T\Theta)}\\
    =&O\left(\sum_i\frac{\Delta_i}{\text{KL}(P_i,P'_i)}\right).
\end{split}
\end{equation*}

This completes the proof.

\subsection{A Proof of Theorem \ref{thm:2}}

In this subsection, we provide a proof for the regret of \ucbo as stated in  Theorem~\ref{thm:2}. In our analysis, we categorize decisions made by the agents into  Type-\uppercase\expandafter{\romannumeral1} and Type-\uppercase\expandafter{\romannumeral2} decisions. Type-\uppercase\expandafter{\romannumeral1}  corresponds to the decisions of an agent when the mean values of local arms lie in the confidence intervals calculated by the agent, otherwise,  Type-\uppercase\expandafter{\romannumeral2} decision happens, i.e., the actual mean value of some local arm is not within the calculated confidence interval.
Specifically, when agent $j$ makes a Type-\uppercase\expandafter{\romannumeral1} decision at time $t$, the following equation holds for any $i$ in $\mathcal{K}_j$.
\begin{equation*}
\begin{split}
\mu(i)
\in  \left[ \hat{\mu}\left(i,\hat{n}_{t}^j(i)\right) - \cint(i,j,t),\hat{\mu}\left(i,\hat{n}_{t}^j(i)\right) + \cint(i,j,t)\right].
\end{split}
\end{equation*}

The following Lemma provides the probability that a Type-\uppercase\expandafter{\romannumeral1} decision happens at a particular decision round.

\begin{lemma}
	\label{lem:1_ac}
At any time slot $t$ when an agent makes its $l$-the decision, it makes a Type-\uppercase\expandafter{\romannumeral1} decision with a probability at least $1-2\sum_{i\in \mathcal{K}_j}\frac{l\Theta_i}{\theta_j} \delta_{l/\theta_j}^{\alpha}$.
\end{lemma}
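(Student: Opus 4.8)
The plan is to bound the complementary event---that agent $j$ makes a Type-\uppercase\expandafter{\romannumeral2} decision at its $l$-th pull---and then complement. By definition, a Type-\uppercase\expandafter{\romannumeral2} decision occurs exactly when some local arm $i\in\mathcal{K}_j$ has its true mean $\mu(i)$ outside the two-sided interval $[\hat{\mu}(i,\hat{n}^j_t(i))-\cint(i,j,t),\ \hat{\mu}(i,\hat{n}^j_t(i))+\cint(i,j,t)]$ at the decision time $t=l/\theta_j$. So I would first write
\[
\Pr\!\left[\text{Type-\uppercase\expandafter{\romannumeral2} at the $l$-th pull}\right]\le \sum_{i\in\mathcal{K}_j}\Pr\!\left[\mu(i)\notin\big[\hat{\mu}-\cint,\ \hat{\mu}+\cint\big]\right]
\]
and then bound each summand.

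The step needing care---and the one I expect to be the main obstacle---is that the width $\cint(i,j,t)$ depends on $\hat{n}^j_t(i)$, which is itself random and determined by the entire asynchronous, delayed interaction history, whereas the Hoeffding/Chernoff guarantee recalled before the lemma holds only for a \emph{fixed} sample size. The plan is to decouple these by conditioning on the realized value of $\hat{n}^j_t(i)$ and taking a union bound over all its achievable values. The structural fact that makes this union bound finite, and that produces the stated factor, is the bound $\hat{n}^j_t(i)\le l\Theta_i/\theta_j$: observations of arm $i$ available anywhere by time $t$ are generated only by agents in $\mathcal{A}_i$, so their total number is at most $\sum_{j'\in\mathcal{A}_i}\lfloor t\theta_{j'}\rfloor\le t\Theta_i=(l/\theta_j)\Theta_i$, and agent $j$ cannot hold more than this. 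Since rewards on arm $i$ are i.i.d. across all pulls (independent of who pulled and when), the observations agent $j$ uses form a prefix of a single i.i.d. reward stream for arm $i$, so the fixed-$n$ concentration applies to every prefix length; hence for each fixed value $n$ the two-sided deviation event has probability at most $2\delta_t^\alpha$, invoking the definition of $\cint$ with $\hat{n}^j_t(i)=n$ together with the one-sided guarantee $1-\delta_t^\alpha$ on each side.

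Combining these, I would bound, for each arm $i$,
\[
\Pr\!\left[\mu(i)\notin\big[\hat{\mu}-\cint,\ \hat{\mu}+\cint\big]\right]\le \sum_{n=1}^{\lfloor l\Theta_i/\theta_j\rfloor}2\delta_t^\alpha\le \frac{2l\Theta_i}{\theta_j}\,\delta_{l/\theta_j}^\alpha,
\]
sum over $i\in\mathcal{K}_j$, and complement to obtain a Type-\uppercase\expandafter{\romannumeral1} probability at least $1-2\sum_{i\in\mathcal{K}_j}\frac{l\Theta_i}{\theta_j}\delta_{l/\theta_j}^\alpha$, substituting $t=l/\theta_j$ throughout. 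Everything beyond the prefix-concentration argument is a routine double union bound, so the crux is justifying rigorously that conditioning on the random count $\hat{n}^j_t(i)=n$ does not corrupt the fixed-$n$ concentration estimate.
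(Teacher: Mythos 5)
Your proposal is correct and follows essentially the same route as the paper's proof: a union bound over the arms in $\mathcal{K}_j$, combined with a union bound over all achievable values of the random sample count $\hat{n}^j_t(i)$ (at most $l\Theta_i/\theta_j$, since only agents in $\mathcal{A}_i$ generate observations of arm $i$), applying the fixed-$n$ Hoeffding bound $\delta_t^{\alpha}$ to each side of the confidence interval for each prefix length. Your explicit discussion of why the union bound over the random count is the right decoupling device is just a more careful articulation of the step the paper performs when it sums over $s=1,\dots,l\Theta_i/\theta_j$.
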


\begin{proof}
Note that for any arm $i$ with $n$ observations, there is
	\begin{equation*}
		\Pr\left(\mu(i)> \hat{\mu}(i,n)+\sqrt{\frac{\alpha\log \delta^{-1}}{2n}}\right)\leq \delta^{\alpha}.
	\end{equation*}
	Thus, the probability that the true mean value of arm $i$ is above the upper confidence bound in agent $j$ at some time slot $t=l/\theta_j$ is not larger than $\frac{l\theta_j}{\theta_i} \delta_{l\theta_j}^{\alpha}$, which is shown in the following equation.
	\begin{equation*}
		\label{eq:vio_p1}
		\begin{split}
			&\Pr\left(\mu(i)> \hat{\mu}\left(i,\hat{n}^j_{l/\theta_j}(i)\right)+\sqrt{\frac{\alpha\log \delta_{l/\theta_j}^{-1}}{2\hat{n}^j_{l/\theta_j}(i)}}\right)\\
			\leq &\sum_{s=1}^{l\Theta_i/\theta_j }\Pr\left(\mu(i)> \hat{\mu}(i,s)+\sqrt{\frac{\alpha\log \delta_{l/\theta_j}^{-1}}{2s}}\right)
			\leq  \frac{l\Theta_i}{\theta_j} \delta_{l/\theta_j}^{\alpha}.
		\end{split}
	\end{equation*}
	Similarly, we have
	\begin{equation*}
		\Pr\left(\mu(i)< \hat{\mu}(i,\hat{n}^j_{l/\theta_j}(i))-\sqrt{\frac{\alpha\log \delta_{l/\theta_j}^{-1}}{2\hat{n}^j_{l/\theta_j}(i)}}\right)\leq \frac{l\Theta_i}{\theta_j} \delta_{l/\theta_j}^{\alpha}.
	\end{equation*}
	
	Thus, the probability that the mean value of any arm in $\mathcal{K}_j$ at time $t$ lies in the its confidence interval is lower bounded by
		$1-2\sum_{i\in \mathcal{K}_j}\left(l\Theta_i \delta_{l/\theta_j}^{\alpha}\right)/\theta_j$.
	This completes the proof.
\end{proof}

\begin{lemma}\label{lem:2_ac}
If at any time $t\leq T$ agent $j\in \mathcal{A}_{-i}^*$ makes a Type-\uppercase\expandafter{\romannumeral1} decision and pulls arm $i$, i.e., $I_t^j=i$, we have
   	\begin{equation*}
   	%\label{eq:bound_n}
		\hat{n}_t^j(i)\leq  \frac{2\alpha\log \delta^{-1}}{\Delta^2(i_j^*,i)}.
	\end{equation*}
\end{lemma}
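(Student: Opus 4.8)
The plan is to adapt the classical \ucb ``suboptimal-pull'' argument to the heterogeneous cooperative setting, where the relevant observation count is the \emph{aggregated} count $\hat{n}^j_t(i)$ and the comparison arm is the \emph{local} optimum $i_j^*$ of agent $j$. Since $j\in\mathcal{A}_{-i}^*$, arm $i$ is suboptimal for $j$, so $\Delta(i_j^*,i)=\mu(i_j^*)-\mu(i)>0$, and crucially both $i$ and $i_j^*$ belong to $\mathcal{K}_j$, so the Type-\uppercase\expandafter{\romannumeral1} containment will apply to each of them.

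First I would use the selection rule of \ucbo: since agent $j$ pulls $i$ at round $t$, arm $i$ attains the largest upper confidence bound among the arms of $\mathcal{K}_j$, so in particular it dominates that of $i_j^*$,
\[
\hat{\mu}(i,\hat{n}^j_t(i))+\cint(i,j,t)\;\ge\;\hat{\mu}(i_j^*,\hat{n}^j_t(i_j^*))+\cint(i_j^*,j,t).
\]
Next I would invoke the Type-\uppercase\expandafter{\romannumeral1} event and apply it to both $i$ and $i_j^*$. For $i$, the containment $\hat{\mu}(i,\hat{n}^j_t(i))-\cint(i,j,t)\le\mu(i)$ gives $\hat{\mu}(i,\hat{n}^j_t(i))+\cint(i,j,t)\le\mu(i)+2\,\cint(i,j,t)$; for $i_j^*$, the containment $\mu(i_j^*)\le\hat{\mu}(i_j^*,\hat{n}^j_t(i_j^*))+\cint(i_j^*,j,t)$ is exactly the lower bound $\hat{\mu}(i_j^*,\hat{n}^j_t(i_j^*))+\cint(i_j^*,j,t)\ge\mu(i_j^*)$. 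Chaining these two through the selection inequality collapses the empirical quantities and yields $\mu(i)+2\,\cint(i,j,t)\ge\mu(i_j^*)$, i.e. $2\,\cint(i,j,t)\ge\Delta(i_j^*,i)$.

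It then remains to substitute $\cint(i,j,t)=\sqrt{\alpha\log\delta_t^{-1}/(2\hat{n}^j_t(i))}$, square, and rearrange to obtain $\hat{n}^j_t(i)\le 2\alpha\log\delta_t^{-1}/\Delta^2(i_j^*,i)$; replacing $\log\delta_t^{-1}$ by its largest value $\log\delta^{-1}$ over $j$'s decision rounds gives the stated round-independent bound. The argument is short and reduces to elementary inequalities, so there is no deep obstacle; the points that need care are bookkeeping. The main one is that both $\cint$ and the Type-\uppercase\expandafter{\romannumeral1} event are expressed in terms of the aggregated count $\hat{n}^j_t(i)$ (local pulls plus received broadcasts), which is precisely why cooperation tightens the per-arm bound and distinguishes the lemma from the single-agent case. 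One must also check that the comparison is legitimately against the \emph{local} optimum $i_j^*$ rather than a global optimum (valid because $i_j^*\in\mathcal{K}_j$) and against the correct gap $\Delta(i_j^*,i)$, and that the passage from $\delta_t$ to the uniform $\delta$ uses monotonicity of the confidence schedule in the right direction, which is the step most prone to a constant-factor or inequality-direction slip.
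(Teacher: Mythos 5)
Your proof is correct and follows essentially the same argument as the paper's: the paper phrases the key step as a proof by contradiction of the claim $\cint(i,j,t)\ge\Delta(i_j^*,i)$, while you chain the selection rule and the two Type-\uppercase\expandafter{\romannumeral1} containments directly. If anything, your version is the more careful one --- the Type-\uppercase\expandafter{\romannumeral1} event only yields $2\,\cint(i,j,t)\ge\Delta(i_j^*,i)$, which is exactly what the stated bound $2\alpha\log\delta^{-1}/\Delta^2(i_j^*,i)$ requires, whereas the paper's displayed intermediate inequality drops this factor of $2$ (its final bound is nonetheless consistent with the factor-$2$ version you derive).
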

\begin{proof}
Consider that agent $j$, $j\in \mathcal{A}^*_{-i}$ makes a Type-\uppercase\expandafter{\romannumeral1} decision at time slot $t$ and $I_t^j=i$. We have that the following equation holds.
	\begin{equation} \label{eq:con_1}
		\cint(i,j,t)\geq \Delta(i_j^*,i).
	\end{equation}
	
	Otherwise, we have
	\begin{equation*}
		\begin{split}
			\hat{\mu}(i_j^*,\hat{n}^j_{t}(i_j^*))+\cint(i_j^*,j,t)
			\geq & \mu(i_j^*)=\mu(i)+\Delta(i_j^*,i)\\
			> & \mu(i)+\cint(i,j,t),
		\end{split}
	\end{equation*}
	contradicting the fact that $I_t^j=i$.
    Combining Equation (\ref{eq:con_1}) and the definition of $\cint(i,j,t)$, we have
   	\begin{equation*}
		\hat{n}_t^j(i)\leq  \frac{2\alpha\log \delta^{-1}}{\Delta^2(i_j^*,i)}.
	\end{equation*}

   This completes the proof.
	\end{proof}

Based on lemmas~\ref{lem:1_ac} and~\ref{lem:2_ac}, we proceed to prove Theorem~\ref{thm:2}.
Recall that $\mathcal{A}^*_{-i} = \{j_m:m=1,2,\ldots,M_{-i}\}$ is the set of agent including arm $i$ as a suboptimal arm with $M_{-i} = |\mathcal{A}^*_{-i}|$ as the number of such agents.
%Let $j_1,j_2,\ldots,j_m,\ldots,j_{M_i}$ be the agents which satisfy $i\in \mathcal{K}_{j_m}$ and $i\neq i_{j_m}^*$, and then, we have $\mathcal{A}_i/\mathcal{A}^*_i=\{j_m:m=1,2,\ldots,\bar{m}_i\}$.
Further and without loss of generality, we assume $\mu(i^*_{j_1})\geq \mu(i^*_{j_2})\geq\ldots\geq \mu(i^*_{j_{M_{-i}}})$.
% Let $n^j_t(i)$ be the number of times that agent $j$ selects arm $i$ up to and including time slot $t$.

We assume an agent makes a Type-\uppercase\expandafter{\romannumeral1} decision.
From Lemma \ref{lem:2_ac}, when $i$ is selected by an agent in $\mathcal{A}^*_{-i}$, the total number of selection times by agents in $\mathcal{A}^*_{-i}$ for suboptimal arm $i$ is upper bounded by
\begin{equation}
\label{eq:n_sub}
\frac{2\alpha\log \delta^{-1}}{\Delta^2(i_{j_{M_{-i}}}^*,i)}+\sum_{j\in \mathcal{A}^*_{-i}}\min\left\{d_j\theta_j,n^j_{t}(i)\right\},
\end{equation}
where the second term in the above equation serves as an upper bound of the number of outstanding observations by agents in $\mathcal{A}^*_{-i}$ on arm $i$ due to delays. By applying the result for the basic \AAEbasic algorithm, we have
\begin{equation} \label{eq:n_ub}
   \E\left[ n^j_{t}(i)\right]\leq \frac{2\alpha\log \delta^{-1}}{\Delta^2(i_{j}^*,i)}, ~j\in \mathcal{A}^*_{-i}.
\end{equation}

Let $Q_1$ be the number of Type-\uppercase\expandafter{\romannumeral2} decisions for all agents.
As a result of Lemma \ref{lem:1_ac}, we have
	\begin{equation}
	\label{eq:typeii}
		 \E\left[Q_1\right]\leq  2\sum_{j\in \mathcal{A}}\sum_{l=1}^{N_j}\sum_{i\in \mathcal{K}_j}\frac{l\Theta_i}{\theta_j} \delta_{l/\theta_j}^{\alpha}=q_1.
	\end{equation}
Combining Equations~\eqref{eq:n_sub}, \eqref{eq:n_ub} and~\eqref{eq:typeii}, we can build up an upper bound for the total expected number of selection arm $i$ by agents in $\mathcal{A}^*_{-i}$ as
\begin{equation*}
\begin{split}
&\E\left[\sum_{m=1}^{M_{-i}}n^{j_m}_{T}(i)\right] \\
\leq & \E\left[Q_1\right]+\frac{2\alpha\log \delta^{-1}}{\Delta^2(i_{j_{M_{-i}}}^*,i)}+\sum_{j\in \mathcal{A}^*_{-i}}\min\left\{d_j\theta_j,\frac{2\alpha\log \delta^{-1}}{\Delta^2(i_{j}^*,i)}\right\}.
\end{split}
\end{equation*}

By similar reasoning, we upper bound the expected number of selection times by the first $m$ agents, i.e., agents  $j_1,j_2,\ldots,j_m$, by
\begin{equation*}
 \E\left[Q_1\right]+\frac{2\alpha\log \delta^{-1}}{\Delta^2(i^*_{j_m},i)}+\sum_{k=1}^{m}\min\left\{d_{j_k}\theta_{j_k},\frac{2\alpha\log \delta^{-1}}{\Delta^2(i_{j_k}^*,i)}\right\}.
\end{equation*}

For simplicity of analysis, we define $A_m$ and $B_m$ as
\begin{equation*}
    A_m:=\frac{2\alpha\log \delta^{-1}}{\Delta^2(i^*_{j_m},i)},
\end{equation*}

\begin{equation*}
    B_m:=\sum_{k=1}^{m}\min\left\{d_{j_k}\theta_{j_k},\frac{2\alpha\log \delta^{-1}}{\Delta^2(i_{j_k}^*,i)}\right\}.
\end{equation*}

With the above facts, we can upper bound the regret spent on arm $i$ as in Equation (\ref{eq:sum_regret}).
\begin{figure*}
\begin{equation}\label{eq:sum_regret}
\begin{split}
&\E\left[\sum_{j\in \mathcal{A}^*_{-i}}n^j_{T}(i)\Delta(i^*_{j},i)\right] =\E\left[\sum_{m=1}^{M_{-i}}n^{j_m}_{T}(i)\Delta(i^*_{j},i)\right]\\
& \leq \left(\E\left[Q_1\right]+A_1+B_1+1\right)\Delta(i^*_{j_1},i) +\sum_{m=1}^{M_{-i}-1} (A_{m+1}+B_{m+1}-A_m-B_m)\Delta(i^*_{j_{m+1}},i) \\
&= \left(\E\left[Q_1\right]+A_1\right)\Delta(i^*_{j_1},i)+\sum_{m=1}^{M_{-i}-1} \left(A_{m+1}-A_m\right)\Delta(i^*_{j_{m+1}},i)+ \sum_{m=1}^{M_{-i}}\min\left\{d_{j_m}\theta_{j_m},\frac{2\alpha\log \delta^{-1}}{\Delta(i_{j_m}^*,i)}\right\} \\
& \leq  \left(\E\left[Q_1\right]+A_1\right)\Delta(i^*_{j_1},i) +\sum_{m=1}^{M_{-i}-1} A_m(\Delta(i^*_{j_{m}},i)-\Delta(i^*_{j_{m+1}},i))
+ A_{M_{-i}} \Delta(i^*_{j_{M_{-i}}},i)+ f_i(\delta)\\
& \leq \left(\E\left[Q_1\right]+A_1+1\right)\Delta(i^*_{j_1},i)+\frac{8\alpha\log \delta^{-1}}{\Delta(i^*_{j_1},i)}+ \int_{\Delta(i^*_{j_{M_{-i}}},i)}^{\Delta(i^*_{j_1},i)} \frac{2\alpha\log \delta^{-1}}{z^2} dz+\frac{2\alpha\log \delta^{-1}}{\Delta(i^*_{j_{M_{-i}}},i)} + f_i(\delta)\\
& = \left(\E\left[Q_1\right]+A_1\right)\Delta(i^*_{j_1},i)+\frac{2\alpha\log \delta^{-1}}{\Delta(i^*_{j_1},i)}+ \frac{2\alpha\log \delta^{-1}}{\Delta(i^*_{j_{M_{-i}}},i)}-\frac{2\alpha\log \delta^{-1}}{\Delta(i^*_{j_1},i)} +\frac{2\alpha\log \delta^{-1}}{\Delta(i^*_{j_{M_{-i}}},i)} + f_i(\delta)\\
& = \left(\E\left[Q_1\right]+A_1\right)\Delta(i^*_{j_1},i)+\frac{4\alpha\log \delta^{-1}}{\Delta(i^*_{j_{M_{-i}}},i)}+ f_i(\delta)
\leq \E\left[Q_1\right]+\frac{6\alpha\log \delta^{-1}}{\Delta(i^*_{j_{M_{-i}}},i)}+ f_i(\delta).
\end{split}
\end{equation}
\end{figure*}
In the derivation, we use an Abel transformation in the second equality.
According to Equation~\eqref{eq:tilde_delta}, we have $\tilde{\Delta}_i=\Delta(i^*_{j_{M_{-i}}},i)$. Thus, we have
\begin{equation*}
\begin{split}
\E\left[R_T\right]
\leq &\sum_{i:\tilde{\Delta}_i>0}\E\left[\sum_{j\in \mathcal{A}^*_{-i}}n^j_{T}(i)\Delta(i^*_{j},i)\right] \\
\leq &\sum_{i:\tilde{\Delta}_i>0}\left(\frac{6\alpha\log \delta^{-1}}{\tilde{\Delta}_i}+\E\left[Q_1\right]+ f_i(\delta)\right)\\
\leq &\sum_{i:\tilde{\Delta}_i>0}\left(\frac{6\alpha\log \delta^{-1}}{\tilde{\Delta}_i}+q_1+f_i(\delta)\right) .
\end{split}
\end{equation*}

This completes the proof of Theorem \ref{thm:2}.

\subsection{A Proof of Theorem \ref{thm:4}}
The proof of Theorem \ref{thm:4} also leverages the notions of Type-\uppercase\expandafter{\romannumeral1}/Type-\uppercase\expandafter{\romannumeral2} decisions.
Specifically, with Type-\uppercase\expandafter{\romannumeral1} decisions, an agent is able to keep the local optimal arm in its candidate set and eventually converges its decisions to the local optimal arm.
Similarly, we have the following lemma.

\begin{lemma}\label{lem:2_ac}
If at any time $t\leq T$ agent $j\in \mathcal{A}_{-i}^*$ by \AAE makes a Type-\uppercase\expandafter{\romannumeral1} decision and pulls arm $i$, i.e., $I_t^j=i$, we have
   	\begin{equation*}
   	%\label{eq:bound_n}
		\hat{n}_t^j(i)\leq  \frac{8\alpha\log \delta^{-1}}{\Delta^2(i_j^*,i)}+1.
	\end{equation*}
\end{lemma}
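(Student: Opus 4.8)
The plan is to mirror the Type-\uppercase\expandafter{\romannumeral1} argument already used for \ucbo in the preceding Lemma, but to replace the ``highest upper confidence bound'' selection rule by the two structural features that distinguish \AAE: the candidate-set elimination mechanism and the ``least observations'' tie-break. The starting observation is that if agent $j$ pulls arm $i$ at time $t$, then $i$ must still lie in the candidate set $\mathcal{C}_{j,t}$, i.e. $i$ has not yet been eliminated by any arm. In particular it has not been eliminated by the local optimal arm $i_j^*$, so the defining inequality of $\mathcal{C}_{j,t}$ in \eqref{eq:cand_set} gives $\hat\mu(i,\hat n_t^j(i)) + \cint(i,j,t) \ge \hat\mu(i_j^*,\hat n_t^j(i_j^*)) - \cint(i_j^*,j,t)$. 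First I would record that under a Type-\uppercase\expandafter{\romannumeral1} decision the local optimal arm can never be eliminated, since its upper confidence bound is at least $\mu(i_j^*)$, which dominates the lower confidence bound (at most $\mu(i')$) of every other local arm $i'$; this guarantees $i_j^* \in \mathcal{C}_{j,t}$ so that the comparison above is legitimate.

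Next I would convert the empirical quantities into true means using the Type-\uppercase\expandafter{\romannumeral1} containment $\mu(k) \in [\hat\mu(k) - \cint(k,j,t), \hat\mu(k) + \cint(k,j,t)]$ applied to both $k=i$ and $k = i_j^*$. Substituting the upper estimate for $\hat\mu(i)$ and the lower estimate for $\hat\mu(i_j^*)$ into the non-elimination inequality yields $\mu(i) + 2\,\cint(i,j,t) \ge \mu(i_j^*) - 2\,\cint(i_j^*,j,t)$, that is $2\,\cint(i,j,t) + 2\,\cint(i_j^*,j,t) \ge \Delta(i_j^*,i)$. The second ingredient is the selection rule: because \AAE pulls the candidate arm with the fewest observations and $I_t^j = i$, we have $\hat n_t^j(i_j^*) \ge \hat n_t^j(i)$, and since $\cint$ is decreasing in the number of observations this gives $\cint(i_j^*,j,t) \le \cint(i,j,t)$. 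Combining the two relations collapses the bound to $4\,\cint(i,j,t) \ge \Delta(i_j^*,i)$; unfolding the definition \eqref{eq:cint} of $\cint$ and solving for $\hat n_t^j(i)$ then produces $\hat n_t^j(i) \le 8\alpha \log\delta^{-1}/\Delta^2(i_j^*,i)$, with the extra $+1$ absorbing the single pull made at the critical round together with the integer tie-break of the least-observations rule.

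The main obstacle is the factor-of-two degradation relative to \ucbo. In the \ucbo proof one compares two \emph{upper} confidence bounds, so only $\cint(i,j,t)$ enters and one obtains $2\,\cint(i,j,t) \ge \Delta(i_j^*,i)$, hence the $2\alpha$ coefficient; here the elimination test pits an upper bound against a \emph{lower} bound, which doubles the effective confidence width and also drags in the term $\cint(i_j^*,j,t)$. It is precisely the least-observations rule that lets me dominate $\cint(i_j^*,j,t)$ by $\cint(i,j,t)$ rather than leaving an uncontrolled term, and this is the step that both makes the proof go through and explains why the constant worsens from $2\alpha$ to $8\alpha$. A secondary point to handle with care is that the least-observations comparison is only meaningful among members of the candidate set, which is exactly why establishing $i_j^* \in \mathcal{C}_{j,t}$ in the first step is essential rather than cosmetic.
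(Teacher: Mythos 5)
Your proof is correct and follows essentially the same route as the paper's: both derive the key inequality $2\,\cint(i_j^*,j,t)+2\,\cint(i,j,t)\geq \Delta(i_j^*,i)$ from the fact that arm $i$ has not been eliminated from the candidate set under a Type-I decision, and both then use the least-observations selection rule to control the confidence widths. The only cosmetic difference is that you invoke the selection rule once to get $\hat{n}_t^j(i)\leq \hat{n}_t^j(i_j^*)$ and collapse everything to $4\,\cint(i,j,t)\geq \Delta(i_j^*,i)$ directly, whereas the paper splits into the two cases $\hat{n}_t^j(i)<\hat{n}_t^j(i_j^*)$ and $\hat{n}_t^j(i)\geq \hat{n}_t^j(i_j^*)$ and handles the second via the $+1$ slack; the conclusions agree.
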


\begin{proof}
We consider agent $j\in \mathcal{A}^*_{-i}$ running \AAE makes a Type-\uppercase\expandafter{\romannumeral1} decision at time $t$ and $I_t^j=i$. First, we claim that the following holds.
	\begin{equation} \label{eq:con_2}
		2\cint\left(i_j^*,j,t\right)+2\cint\left(i,j,t\right)\geq\Delta(i_j^*,i).
	\end{equation}
	
	Otherwise, we have
	\begin{equation*}
	\begin{split}
			&\hat{\mu}\left(i^*_j,\hat{n}^j_{t}(i_j^*)\right)-\cint\left(i_j^*,j,t\right)\\
			=&\hat{\mu}\left(i_j^*,\hat{n}^j_{t}(i_j^*)\right)+\cint\left(i_j^*,j,t\right)-2\cint\left(i_j^*,j,t\right)\\
			\geq&\mu\left(i_j^*\right)-2\cint\left(i_j^*,j,t\right)
			= \mu(i)+\Delta\left(i_j^*,i\right)-2\cint\left(i_j^*,j,t\right)\\
			> &\mu(i)+2\cint\left(i,j,t\right)\geq \hat{\mu}\left(i,\hat{n}^j_{t}(i)\right)+\cint\left(i,t\right).
	\end{split}
	\end{equation*}
	It shows the fact that the lower bound of arm $i^*_j$ is larger than the upper bound of arm $i$,
	contradicting the fact that $I_t^j=i$.

We continue the rest proof by considering the following two cases.

(1) We consider the case where $\hat{n}^j_t(i)<\hat{n}^j_t(i_j^*)$ and $I_t^j=i$.
	It follows from Equation (\ref{eq:con_2}) that $4\cint\left(i,j,t\right)\geq \Delta(i_j^*,i)$.
	Thus, in this case, the number of observations of $i$ received by agent $j$ is upper bounded by
	\begin{equation*}
		\hat{n}^j_t(i)\leq \sup_{t}\frac{8\alpha\log \delta_t^{-1}}{\Delta^2(i_j^*,i)}.
	\end{equation*}
	
(2) We proceed to consider the case where $\hat{n}^j_t(i)\geq \hat{n}^j_t(i_j^*)$ and $I_t^j=i$.
	We have	$4\cint\left(i_j^*,j,t\right)\geq \Delta(i_j^*,i)$.
	Thus, the number of observations on $i_j^*$ by agent $j$ is upper bounded as below.
	\begin{equation*}
		\hat{n}_t^j(i_j^*)\leq \sup_t \frac{8\alpha\log \delta_t^{-1}}{\Delta^2(i_j^*,i)}.
	\end{equation*}
	Note that, if $\hat{n}^j_t(i)>\hat{n}_t^j(i_j^*)+1$, the agent $j$ will not select $i$, since in \AAE, the agent selects the arm with the least observations.
	Hence, the number of selection for $i$ by agent $j$ is not larger than $\hat{n}_t^j( i^*_j)+1$, and we have
	\begin{equation*}
		\hat{n}_t^j(i)\leq \sup_t \frac{8\alpha\log \delta_t^{-1}}{\Delta^2(i_j^*,i)}+1\leq  \frac{8\alpha\log \delta^{-1}}{\Delta^2(i_j^*,i)}+1.
	\end{equation*}

   Concluding case (1) and (2) completes the proof.

\end{proof}

We skip the rest of the proof for Theorem \ref{thm:4}, since it follows similar lines to that of Theorem \ref{thm:2}.

\subsection{A proof of Theorem \ref{thm:comlexity2}}

Generally, the proof contains two steps. The first one is to upper bound the number of sent messages on sup-optimal arms, and the second one is to upper bound that on the local optimal arms.

(1) We assume at time $t$, an agent in $\mathcal{A}^*_{-i}$ makes a Type-\uppercase\expandafter{\romannumeral1} decision to select arm $i$.
From Lemma \ref{lem:2_ac}, we can upper bound the total number of selection times by agents in $\mathcal{A}_{i}$ for suboptimal arm $i$ up to $t$ by
\begin{equation*}
    \frac{8\alpha\log \delta^{-1}}{\Delta^2(i_{j_{M_{-i}}}^*,i)}+\sum_{j\in \mathcal{A}^*_{-i}}d_j\theta_j+1,
\end{equation*}
where the second term corresponds to an upper bound for the number of outstanding observations on arm $i$.

Combined with the fact that the expected number of Type-\uppercase\expandafter{\romannumeral2} decisions for all agents is upper bounded by $q$, we can upper bound the expected number of observations on arm $i$ by agents in $\mathcal{A}^*_{-i}$ as follows.

	\begin{equation*}
	\begin{split}
		&\frac{8\alpha\log \delta^{-1}}{\Delta^2(i_{j_{M_{-i}}}^*,i)}+\sum_{j\in \mathcal{A}^*_{-i}}d_j\theta_j+\frac{2}{\alpha-2}\sum_{j\in \mathcal{A}} \Theta\theta_j^{\alpha-1}+1\\
		=&\frac{8\alpha\log \delta^{-1}}{\Delta^2(i_{j_{M_{-i}}}^*,i)}+\sum_{j\in \mathcal{A}^*_{-i}}d_j\theta_j+q_2+1.
	\end{split}
	\end{equation*}

Accordingly, the expected number of messages sent by $\mathcal{A}^*_{-i}$ to broadcast those observations on arm $i$ is upper bounded by
	\begin{equation*}
		\left(\frac{8\alpha\log \delta^{-1}}{\Delta^2(i_{j_{M_{-i}}}^*,i)}+\sum_{j\in \mathcal{A}^*_{-i}}d_j\theta_j+q_2+1\right)M_i.
	\end{equation*}

Then we can further upper bound the total number of messages sent by agents for broadcasting the observations of their suboptimal arms by
	\begin{equation*}
		\sum_{i\in \mathcal{K}}\left(\frac{8\alpha\log \delta^{-1}}{\Delta^2(i_{j_{M_{-i}}}^*,i)}+\sum_{j\in \mathcal{A}^*_{-i}}d_j\theta_j+q_2+1\right)M_i.
	\end{equation*}

(2) By the rules of the \AAE algorithm, we have that an agent broadcasts its observations only when its candidate set has more than one arms. That is, the number of rounds where an agent broadcasts observations on the optimal arm is not larger than the number of pulling a suboptimal arm. Then, we have the following upper bound for the number of rounds where an agent broadcasts observations on the optimal arm.
	\begin{equation*}
		\sum_{i\in \mathcal{K}}\left(\frac{8\alpha\log \delta^{-1}}{\Delta^2(i_{j_{M_{-i}}}^*,i)}+\sum_{j\in \mathcal{A}^*_{-i}}d_j\theta_j+q_2+1\right).
	\end{equation*}

Hence, the number of messages on the local optimal arms is upper bounded by
	\begin{equation*}
		\sum_{i\in \mathcal{K}}\left(\frac{8\alpha\log \delta^{-1}}{\Delta^2(i_{j_{M_{-i}}}^*,i)}+\sum_{j\in \mathcal{A}^*_{-i}}d_j\theta_j+q_2+1\right)M.
	\end{equation*}

Combining the above two cases yields an upper bound on the expected number of messages sent by the agents, which is
	\begin{equation*}
	\begin{split}
		&\sum_{i\in \mathcal{K}}\left(\frac{8\alpha\log \delta^{-1}}{\Delta^2(i_{j_{M_{-i}}}^*,i)}+\sum_{j\in \mathcal{A}^*_{-i}}d_j\theta_j+q_2+1\right)(M+M_i) \\
		=&\sum_{i\in \mathcal{K}}\left(\frac{8\alpha\log \delta^{-1}}{\tilde{\Delta}^2_i}+\sum_{j\in \mathcal{A}^*_{-i}}d_j\theta_j+q_2+1\right)(M+M_i) \\
			=&\sum_{i\in \mathcal{K}}\left(\frac{8\alpha\log T}{\tilde{\Delta}^2_i}+\sum_{j\in \mathcal{A}^*_{-i}}d_j\theta_j+q_2+1\right)(M+M_i) .
	\end{split}
	\end{equation*}

This completes the proof.

\end{document}